\def\1{\bm{1}}
\def\rvepsilon{{\bm{\epsilon}}}
\def\rvf{{\mathbf{f}}}
\def\rvh{{\mathbf{h}}}
\def\rvs{{\mathbf{s}}}
\def\rvw{{\mathbf{w}}}
\def\rvx{{\mathbf{x}}}
\def\rvy{{\mathbf{y}}}
\def\vzero{{\bm{0}}}
\def\vI{{\bm{I}}}
\def\mI{{\bm{I}}}
\DeclareMathAlphabet{\mathsfit}{\encodingdefault}{\sfdefault}{m}{sl}
\SetMathAlphabet{\mathsfit}{bold}{\encodingdefault}{\sfdefault}{bx}{n}
\def\gL{{\mathcal{L}}}
\def\gN{{\mathcal{N}}}
\def\gO{{\mathcal{O}}}
\def\gU{{\mathcal{U}}}
\newcommand{\pdata}{q_{\rm{data}}}
\newcommand{\pprior}{p_{\rm{prior}}}
\newcommand{\E}{\mathbb{E}}
\newcommand{\Ls}{\mathcal{L}}
\newcommand{\R}{\mathbb{R}}
\newcommand{\defeq}{\vcentcolon=}
\newcommand{\parderiv}[2]{\frac{\partial}{\partial #2} #1}
\newcommand{\gradnd}[2]{\nabla_{#2} #1}
\newcommand{\scitenum}[1]{[\citenum{#1}]}
\theoremstyle{plain}
\theoremstyle{definition}
\theoremstyle{remark}
\definecolor{Gray}{gray}{0.85}
\definecolor{LightCyan}{rgb}{0.88,1,1}
\def\@onedot{\ifx\@let@token.\else.\null\fi\xspace}
\DeclareRobustCommand\onedot{\futurelet\@let@token\@onedot}
\newcommand{\figref}[1]{Figure~\ref{#1}}
\newcommand{\equref}[1]{Eq\onedot~\eqref{#1}}
\newcommand{\twoequref}[2]{Eq\onedot~\eqref{#1} and \eqref{#2}}
\newcommand{\tabref}[1]{Table~\ref{#1}}
\newcommand{\thmref}[1]{Theorem~\ref{#1}}
\newcommand{\appref}[1]{Appendix~\ref{#1}}
\newcommand{\algoref}[1]{Algorithm~\ref{#1}}
\newcommand*\circled[1]{\tikz[baseline=(char.base)]{
            \node[shape=circle,draw,inner sep=0.5pt] (char) {#1};}}
\newcommand{\xmark}{\ding{55}}
\newcommand{\cmark}{\ding{51}}
\def\eg{\emph{e.g}\onedot}
\def\ie{\emph{i.e}\onedot}
\def\wrt{w.r.t\onedot}
\newcommand{\mypara}[1]{\noindent\textbf{#1}}
\newcommand{\model}[1][]{DDBM\space}
\title{Denoising Diffusion Bridge Models }
\author{
Linqi Zhou \quad\quad\quad Aaron Lou \quad\quad\quad Samar Khanna \quad\quad\quad Stefano Ermon \\
Department of Computer Science, Stanford University \\
\texttt{\{linqizhou, aaronlou, samar.khanna, ermon\}@stanford.edu} \\
}
\begin{document}

\maketitle

\begin{abstract}

    Diffusion models are powerful generative models that map noise to data using stochastic processes. However, for many applications such as image editing, the model input comes from a distribution that is not random noise. As such, diffusion models must rely on cumbersome methods like guidance or projected sampling to incorporate this information in the generative process. In our work, we propose Denoising Diffusion Bridge Models (DDBMs), a natural alternative to this paradigm based on \textit{diffusion bridges}, a family of processes that interpolate between two paired distributions given as endpoints. Our method 
    learns the score of the diffusion bridge from data and maps from one endpoint distribution to the other by solving a (stochastic) differential equation based on the learned score. Our method naturally unifies several classes of generative models, such as score-based diffusion models and OT-Flow-Matching, allowing us to adapt existing design and architectural choices to our more general problem. Empirically, we apply DDBMs to challenging image datasets in both pixel and latent space. On standard image translation problems, DDBMs achieve significant improvement over baseline methods, and, when we reduce the problem to image generation by setting the source distribution to random noise, DDBMs achieve comparable FID scores to state-of-the-art methods despite being built for a more general task.
\end{abstract}
\section{Introduction}

Diffusion models are a powerful class of generative models which learn to reverse a diffusion process mapping data to noise \citep{sohl2015deep,song2019generative, ho2020denoising, song2020score}. For image generation tasks, they have surpassed GAN-based methods \citep{Goodfellow2014GenerativeAN} and achieved a new state-of-the-art for perceptual quality \citep{dhariwal2021diffusion}. Furthermore, these capabilities have spurred the development of modern text-to-image generative AI systems\citep{Ramesh2022HierarchicalTI}.

Despite these impressive results, standard diffusion models are ill-suited for other tasks. In particular, the diffusion framework assumes that the prior distribution is random noise, which makes it difficult to adapt to tasks such as image translation, where the goal is to map between pairs of images. As such, one resorts to cumbersome techniques, such as conditioning the model \citep{ho2022classifier, Saharia2021PaletteID} or manually altering the sampling procedure \citep{meng2022sdedit, song2020score}. These methods are not theoretically principled and map in one direction (typically from corrupted to clean images), losing the cycle consistency condition \citep{Zhu2017UnpairedIT}. 

Instead, we consider methods which directly model a transport between two arbitrary probability distributions. This framework naturally captures the desiderata of image translation, but existing methods fall short empirically. For instance, ODE based flow-matching methods \citep{lipman2023flow,albergo2023building,liu2022flow}, which learn a deterministic path between two arbitrary probability distributions, have mainly been applied to image generation problems and have not been investigated for image translation. Furthermore, on image generation, ODE methods have not achieved the same empirical success as diffusion models.
Schr\"odinger Bridge and models \citep{de2021diffusion} are another type of model which instead learn an entropic optimal transport between two probability distributions. However, these rely on expensive iterative approximation methods and have also found limited empirical use. More recent extensions including Diffusion Bridge Matching~\citep{shi2023diffusion,peluchetti2023diffusion} similarly require expensive iterative calculations.

In our work, we seek a scalable alternative that unifies diffusion-based unconditional generation methods and transport-based distribution translation methods, and we name our general framework Denoising Diffusion Bridge Models (DDBMs). We consider a reverse-time perspective of \textit{diffusion bridges}, a diffusion process conditioned on given endpoints, and use this perspective to establish a general framework for distribution translation. We then note that this framework subsumes existing generative modeling paradigms such as score matching diffusion models~\citep{song2020score} and flow matching optimal transport paths~\citep{albergo2023building,lipman2023flow,liu2022flow}. This allows us to reapply many design choices to our more general task. In particular, we use this to generalize and improve the architecture pre-conditioning, noise schedule, and model sampler, minimizing input sensitivity and stabilizing performance. We then apply DDBMs to high-dimensional images using both pixel and latent space based models. For standard image translation tasks, we achieve better image quality (as measured by FID \citep{heusel2017gans}) and significantly better translation faithfulness (as measured by LPIPS~\citep{zhang2018perceptual} and MSE). Furthermore, when we reduce our problem to image generation, we match standard diffusion model performance. 

\section{Preliminaries}
\label{sec:prelim}

Recent advances in generative models have relied on the classical notion of transporting a data distribution $\pdata(\rvx)$ gradually to a prior distribution $\pprior(\rvx)$ \citep{Villani2008OptimalTO}. By learning to reverse this process, one can sample from the prior and generate realistic samples.

\subsection{Generative Modeling with Diffusion Models}

\mypara{Diffusion process.}
We are interested in modeling the distribution $\pdata(\rvx)$, for $\rvx \in \R^d$. We do this by constructing a diffusion process, which is represented by a set of time-indexed variables $\{\rvx_{t}\}_{t=0}^{T}$ such that $\rvx_0 \sim p_0(\rvx) := \pdata(\rvx)$ and $\rvx_T \sim p_T(\rvx) := \pprior(\rvx)$. Here $\pdata(\rvx)$ is the initial ``data" distribution and $\pprior(\rvx)$ is the final ``prior" distribution. The process can be modeled as the solution to the following SDE
\begin{align}\label{eq:diffusion}
\quad d\rvx_t = \rvf(\rvx_t, t) dt + g(t) d\rvw_t
\end{align}
where $\rvf: \R^d\times [0,T] \rightarrow \R^d$ is vector-valued \textit{drift} function, $g: [0,T]\rightarrow \R$ is a scalar-valued \textit{diffusion} coefficient, and $\rvw_t$ is a Wiener process. Following this diffusion process forward in time constrains the final variable $\rvx_T$ to follow distribution $\pprior(\rvx)$.  The reverse of this process is given by
\begin{align}\label{eq:reverse_diffusion}
\quad d\rvx_t = \rvf(\rvx_t, t) - g(t)^2 \nabla_{\rvx_t}\log p(\rvx_t)) dt + g(t) d\rvw_t
\end{align}
where $p(\rvx_t) \defeq p(\rvx_t, t)$ is the marginal distribution of $\rvx_t$ at time $t$. Furthermore, one can derive an equivalent deterministic process called the probability flow ODE \citep{song2020score}, which has the same marginal distributions:
\begin{align}\label{eq:diffusion-ode}
    d\rvx_t = \Big[\rvf(\rvx_t, t) - \frac{1}{2}g(t)^2 \nabla_{\rvx_t}\log p(\rvx_t)\Big] dt
\end{align}
In particular, one can draw $\rvx_T\sim \pdata(\rvy)$ and sample $\pdata$ by solving either the above reverse SDE or ODE backward in time.

\mypara{Denoising score-matching.}
The score, $\gradnd{\log p(\rvx_t)}{\rvx_t}$, can be learned by the score-matching loss
\begin{align}\label{eq:sm}
    \Ls(\theta) = \E_{\rvx_t\sim p(\rvx_t\mid \rvx_0), \rvx_0 \sim \pdata(\rvx),t\sim\gU(0,T)}\Big[\norm{\rvs_\theta(\rvx_t, t) - \gradnd{\log p(\rvx_t\mid\rvx_0)}{\rvx_t}}^2\Big]
\end{align}
such that the minimizer $\rvs^*_\theta(\rvx_t, t)$ of the above loss approximates the true score. Crucially, the above loss is tractable because the transition kernel $p(\rvx_t\mid \rvx_0)$, which depends on specific choices of drift and diffusion functions, is designed to be Gaussian $\rvx_t = \alpha_t \rvx_0 + \sigma_t \rvepsilon$, where $\alpha_t$ and $\sigma_t$ are functions of time and $\rvepsilon\sim \gN(\vzero, \vI)$. It is also common to view the diffusion process in terms of the $\rvx_t$'s signal-to-noise ratio (SNR), defined as $\alpha_t^2/\sigma_t^2$.

\subsection{Diffusion Process with Fixed Endpoints}

Diffusion models are limited because they can only transport complex data distributions to a standard Gaussian distribution and cannot be naturally adapted to translating between two arbitrary distributions, \eg in the case of image-to-image translation. Luckily, classical results have shown that one can condition a diffusion process on a fixed known endpoint via the famous Doob's $h$-transform:



\mypara{Stochastic bridges via $h$-transform.}
Specifically, a diffusion process defined in \equref{eq:diffusion} can be driven to arrive at a particular point of interest $y \in \R^d$ almost surely via Doob's $h$-transform \citep{doob1984classical,rogers2000diffusions},
\begin{align}\label{eq:doob}
    d\rvx_t = \rvf(\rvx_t, t) dt + g(t)^2 \rvh(\rvx_t, t, y, T) + g(t) d\rvw_t, \quad \rvx_0\sim \pdata(\rvx),\quad \rvx_T = y
\end{align}
where $\rvh(x, t, y, T) = \nabla_{\rvx_t}\log p(\rvx_T\mid \rvx_t)\big \rvert_{\rvx_t = x, \rvx_T=y}$ is the gradient of the log transition kernel of from $t$ to $T$ generated by the original SDE, evaluated at points $\rvx_t=x$ and $\rvx_T=y$, and each $\rvx_t$ now explicitly depends on $y$ at time $T$. Furthermore, $p(\rvx_T=y\mid \rvx_t)$ satisfies the Kolmogorov backward equation (specified in \appref{sec:proof}). With specific drift and diffusion choices, \eg $\rvf(\rvx_t, t)=\vzero$, $\rvh$ is tractable due to the tractable (Gaussian) transition kernel of the underlying diffusion process.

When the initial point $\rvx_0$ is fixed, 
the process is often called a \textit{diffusion bridge} \citep{sarkka2019applied,heng2021simulating,delyon2006simulation,schauer2017guided,peluchettinon,liu2022let}, and its ability to connect any given $\rvx_0$ to a given value of $\rvx_T$ is promising for image-to-image translation. Furthermore, the transition kernel may be tractable, which serves as further motivation.



\section{Denoising Diffusion Bridge Models}
\label{sec:methods}

\begin{figure}
    \centering
    \includegraphics[width=0.85\textwidth]{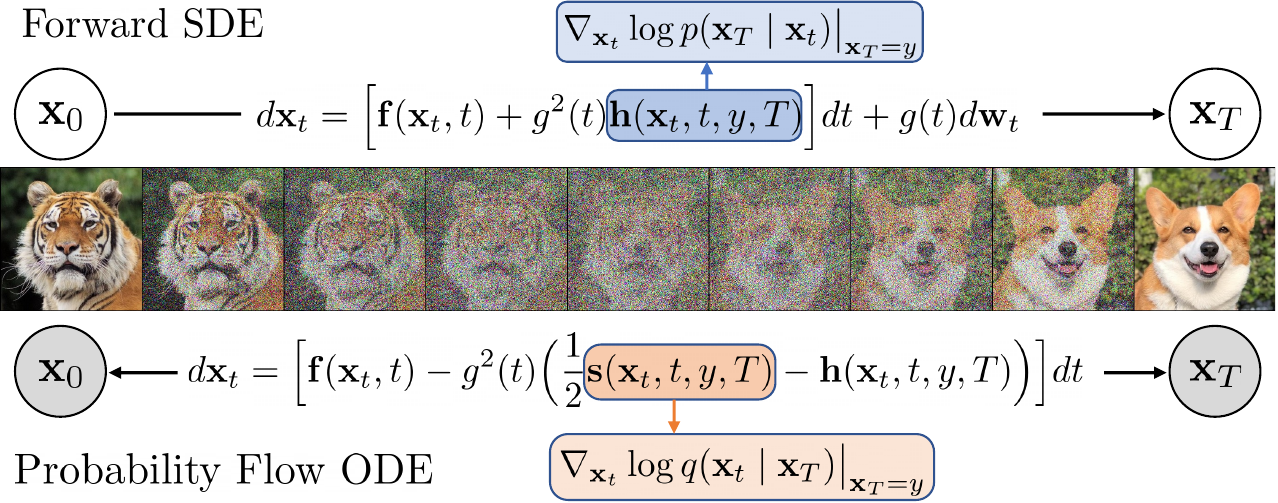}
    \caption{\textbf{A schematic for Denoising Diffusion Bridge Models.} DDBM uses a diffusion process guided by a drift adjustment (in blue) towards an endpoint $\rvx_T = y$. They lears to reverse such a bridge process by matching the denoising bridge score (in orange), which allows one to reverse from $\rvx_T$ to $\rvx_0$ for any $\rvx_T=\rvy\sim\pdata(\rvy)$. The forward SDE process shown on the top is unidirectional while the probability flow ODE shown at the bottom is deterministic and bidirectional. White nodes are stochastic while grey nodes are deterministic.}
    \label{fig:bridge}
    \vspace{-3mm}
\end{figure}


Assuming that the endpoints of a diffusion bridge both exist in $\R^d$ and come from an arbitrary and unknown joint distribution, \ie $(\rvx_0, \rvx_T) =(\rvx, \rvy)\sim \pdata(\rvx,\rvy)$, we wish to devise a process that learns to approximately sample from $\pdata(\rvx\mid \rvy)$ by reversing the diffusion bridge with boundary distribution $\pdata(\rvx,\rvy)$, given a training set of \emph{paired} samples drawn from $\pdata(\rvx,\rvy)$.

\subsection{Time-Reversed SDE and Probability Flow ODE}\label{sec:time-rev}
Inspired by diffusion bridges, we construct the stochastic process $\{\rvx_t\}_{t=0}^{T}$ with marginal distribution $q(\rvx_t)$ such that $q(\rvx_0, \rvx_T)$ approximates $\pdata(\rvx_0,\rvx_T)$. Reversing the process amounts to sampling from $q(\rvx_t\mid \rvx_T)$. Note that distribution $q(\cdot)$ is different from $p(\cdot)$, \ie the diffusion marginal distribution, in that the endpoint distributions are now $\pdata(\rvx_0, \rvx_T) = \pdata(\rvx, \rvy)$ instead of the distribution of a diffusion $p(\rvx_0, \rvx_T) = p(\rvx_T\mid \rvx_0)\pdata(\rvx_0)$, which defines a Gaussian $\rvx_T$ given $\rvx_0$. 
We can construct the time-reversed SDE/probability flow ODE of $q(\rvx_t\mid \rvx_T)$ via the following theorem.
\begin{restatable}[]{theorem}{revbridge}\label{thm:revbridge}
The evolution of conditional probability $q(\rvx_t\mid \rvx_T)$  has a time-reversed SDE of the form
\begin{align}\label{eq:bridgesde}
    d\rvx_t = \Big[\rvf(\rvx_t, t) - g^2(t) \Big(\rvs(\rvx_t, t, y, T) - \rvh(\rvx_t, t, y, T)\Big)\Big]dt + g(t) d\hat{\rvw}_t, \quad \rvx_T = y
\end{align}
with an associated probability flow ODE
\begin{align}\label{eq:bridgeode}
    d\rvx_t = \Big[\rvf(\rvx_t, t) - g^2(t) \Big( \frac{1}{2}\rvs(\rvx_t, t, y, T) - \rvh(\rvx_t, t, y, T)\Big)\Big]dt,\quad \rvx_T = y
\end{align}
on $t\le T-\epsilon$ for any $\epsilon>0$, where $\hat{\rvw}_t$ denotes a Wiener process, $\rvs(x, t, y, T) = \gradnd{\log q(\rvx_t\mid \rvx_T)}{\rvx_t}\big \vert_{\rvx_t = x, \rvx_T = y}$ and $\rvh$ is as defined in \equref{eq:doob}.
\end{restatable}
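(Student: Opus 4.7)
The plan is to treat the bridge as an ordinary It\^o SDE on the half-open interval $[0, T-\epsilon]$, where its marginals are smooth densities, and then invoke two classical results for marginals of SDEs: Anderson's time-reversal formula for the stochastic version, and the probability-flow construction for the deterministic version.

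First I would record the forward dynamics of the bridge. By Doob's $h$-transform (see \equref{eq:doob}), conditioning the base diffusion in \equref{eq:diffusion} on $\rvx_T = y$ yields the forward SDE
\begin{align*}
    d\rvx_t = \bigl[\rvf(\rvx_t, t) + g(t)^2 \rvh(\rvx_t, t, y, T)\bigr]\, dt + g(t)\, d\rvw_t,
\end{align*}
whose marginals are exactly $q(\rvx_t \mid \rvx_T = y)$. Next, I would apply Anderson's classical time-reversal theorem, which states that for any SDE $d\rvx_t = \tilde{\rvf}(\rvx_t, t)\, dt + g(t)\, d\rvw_t$ with sufficiently regular strictly positive marginals $q_t$, the time-reversed process satisfies $d\rvx_t = \bigl[\tilde{\rvf}(\rvx_t, t) - g(t)^2 \nabla_{\rvx_t}\log q_t(\rvx_t)\bigr]\, dt + g(t)\, d\hat{\rvw}_t$. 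Substituting $\tilde{\rvf} = \rvf + g^2\rvh$ and identifying the marginal score as $\rvs(\rvx_t, t, y, T) = \nabla_{\rvx_t}\log q(\rvx_t \mid \rvx_T)$, the drift becomes $\rvf - g^2(\rvs - \rvh)$, which is \equref{eq:bridgesde}.

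For the probability flow ODE, I would invoke the standard equivalence, obtained by manipulating the Fokker--Planck equation, that for the same marginals one can replace the diffusion term by a drift correction of half the magnitude: any SDE with drift $\tilde{\rvf}$ and diffusion $g$ shares its marginal law with the ODE $d\rvx_t = \bigl[\tilde{\rvf}(\rvx_t, t) - \tfrac{1}{2}g(t)^2 \nabla_{\rvx_t}\log q_t(\rvx_t)\bigr]\, dt$. Applied to the bridge drift $\rvf + g^2\rvh$, this rearranges to $\rvf - g^2(\tfrac{1}{2}\rvs - \rvh)$, giving \equref{eq:bridgeode}. Both derivations reverse trivially in time, which is the direction used for sampling.

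The main technical obstacle is the singular behavior as $t \to T$. Because the bridge is conditioned on $\rvx_T = y$, the density $q(\rvx_t \mid \rvx_T = y)$ collapses to a Dirac at $y$, so both $\rvh$ and $\rvs$ blow up and the regularity hypotheses of Anderson's theorem fail on the closed interval. I would therefore carry out the derivation on $[0, T - \epsilon]$ for arbitrary $\epsilon > 0$, where hypoellipticity of the base diffusion together with positivity of the transition kernel $p(\rvx_T \mid \rvx_t)$ guarantees that $q(\rvx_t \mid \rvx_T = y)$ is a smooth, strictly positive density with integrable score, so Anderson's theorem and the probability-flow identity apply verbatim. This is exactly the range $t \le T - \epsilon$ in the theorem statement. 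A supporting lemma that $p(\rvx_T = y \mid \rvx_t)$ satisfies the Kolmogorov backward equation associated with \equref{eq:diffusion}, referenced in the paper, can be used to show both that $\rvh$ is well-defined on $[0, T - \epsilon]$ and that the conditioned drift preserves the correct marginals.
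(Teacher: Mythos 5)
Your route is valid and genuinely different from the paper's. You treat the forward bridge dynamics as given --- the Doob $h$-transform SDE with drift $\rvf + g^2(t)\rvh$ --- and then obtain \equref{eq:bridgesde} from Anderson's time-reversal theorem and \equref{eq:bridgeode} from the standard diffusion-to-continuity-equation conversion, both invoked as black boxes on $[0,T-\epsilon]$. The paper instead works entirely at the Fokker--Planck level: it writes the doubly pinned kernel $p(\rvx_t\mid\rvx_0,\rvx_T)$ via Bayes' rule, differentiates in time using the Kolmogorov forward equation for $p(\rvx_t\mid\rvx_0)$ and the backward equation for $p(\rvx_T\mid\rvx_t)$, regroups terms into a Fokker--Planck equation with drift $\rvf + g^2(t)\rvh$, marginalizes over $\rvx_0\sim\pdata(\rvx_0\mid\rvx_T)$, and only then reads off the reverse SDE and applies the conversion trick of Song et al.\ for the ODE. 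Your version is shorter and exposes the logical skeleton; the paper's version self-contains the verification of the one fact you assert rather than prove, and that fact is the crux.

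Specifically, the step to make precise is the claim that the $h$-transformed SDE has marginals ``exactly $q(\rvx_t\mid\rvx_T=y)$.'' Conditioning the base diffusion of \equref{eq:diffusion} on $\rvx_T=y$ produces marginals $\int p(\rvx_t\mid\rvx_0,\rvx_T=y)\,p(\rvx_0\mid\rvx_T=y)\,d\rvx_0$, weighted by the diffusion's own posterior over $\rvx_0$, whereas the theorem concerns $q(\rvx_t\mid\rvx_T=y)=\int p(\rvx_t\mid\rvx_0,\rvx_T=y)\,\pdata(\rvx_0\mid\rvx_T=y)\,d\rvx_0$ for an arbitrary coupling $\pdata(\rvx_0,\rvx_T)$ --- and this mismatch is precisely what makes the theorem nontrivial for distribution translation. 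The repair is the observation the paper makes when it takes the expectation over $\rvx_0$ inside its Fokker--Planck equation: the $h$-drift $\rvf+g^2(t)\rvh$ does not depend on $\rvx_0$, so every $\rvx_0$-pinned bridge solves the same SDE, and therefore the mixture initialized from $\pdata(\rvx_0\mid\rvx_T=y)$ also solves it; its marginal flow is then $q(\rvx_t\mid\rvx_T=y)$, and Anderson's theorem and the probability-flow identity apply to that process. With that one sentence added, together with your regularity restriction to $t\le T-\epsilon$ (which matches the paper's treatment of the singular endpoint), your argument is complete.
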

A schematic of the bridge process is shown in \figref{fig:bridge}. Note that this process is defined up to $T-\epsilon$. To recover the initial distribution in the SDE case, we make an approximation that $\rvx_{T-\epsilon}\approx y$ for some small $\epsilon$ simulate SDE backward in time. For the ODE case, since we need to sample from $p(\rvx_{T-\epsilon})$ which cannot be Dirac delta, we cannot approximate $\rvx_{T-\epsilon}$ with a single $y$. Instead, we can first approximate $\rvx_{T-\epsilon'}\approx y$ where $\epsilon>\epsilon' > 0$, and then take an Euler-Maruyama step to $\rvx_{T-\epsilon}$, and \equref{eq:bridgeode} can be used afterward.
A toy visualization of VE bridge and VP bridges are shown in \figref{fig:bridge-example}. The top and bottom shows the respective SDE and ODE paths for VE and VP bridges. 

\begin{figure}[t]
    \centering
    \begin{subfigure}[t]{0.45\linewidth}
        \centering
        \includegraphics[width=\linewidth]{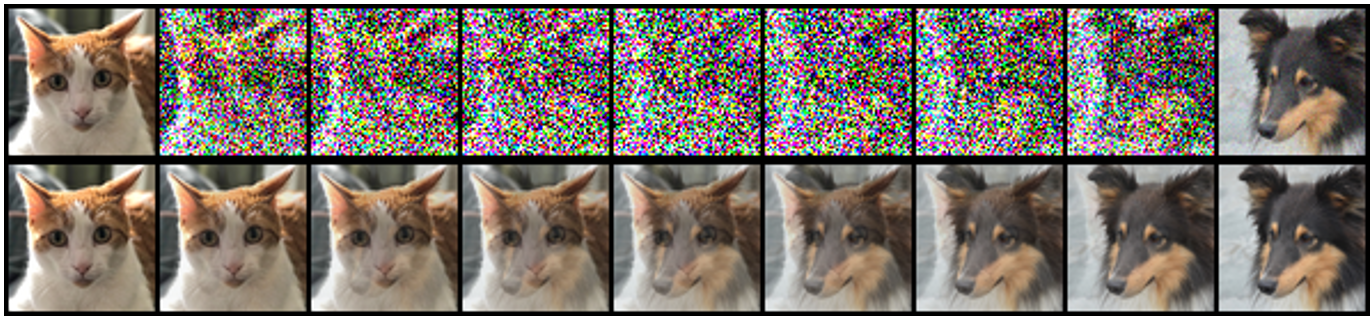}
        \label{fig:ve-interp}
    \end{subfigure}
    \hspace{0.02\linewidth}
    \begin{subfigure}[t]{0.45\linewidth}
        \centering
        \includegraphics[width=\linewidth]{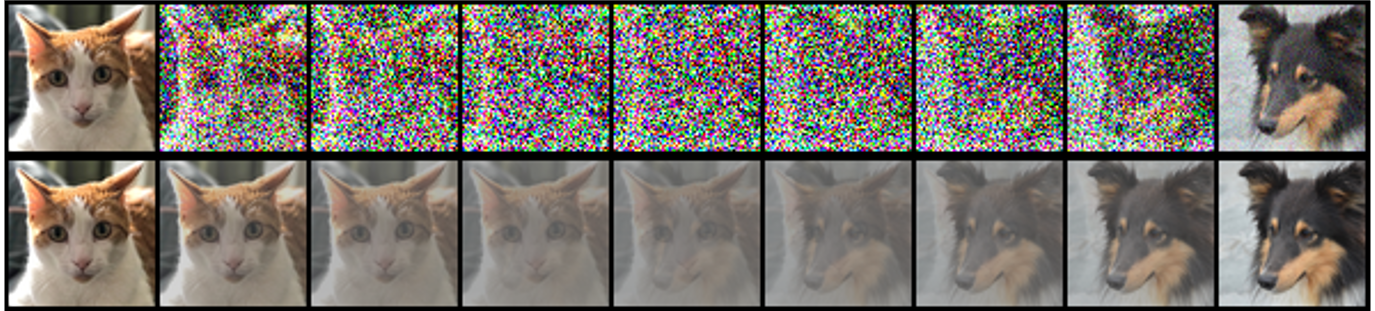}
        \label{fig:vp-interp}
    \end{subfigure}
    \vspace{-5mm}
    \caption{VE bridge (left) and VP bridge (right) with their SDE (top) and ODE (bottom) visualization. }
    \label{fig:bridge-example}
    \vspace{-3mm}
\end{figure}


\subsection{Marginal Distributions and Denoising Bridge Score Matching}

The sampling process in \thmref{thm:revbridge} requires approximation of the score $\rvs(x, t, y, T) = \gradnd{\log q(\rvx_t\mid \rvx_T)}{\rvx_t}\big \vert_{\rvx_t = x, \rvx_T = y}$ where $q(\rvx_t\mid\rvx_T)=\int_{\rvx_0}q(\rvx_t\mid \rvx_0, \rvx_T)\pdata(\rvx_0\mid\rvx_T)d\rvx_0$. However, as the true score is not known in closed-form, we take inspiration from denoising score-matching~\citep{song2020score} and use a neural network to approximate the true score by matching against a tractable quantity. This usually results in closed-form marginal sampling of $\rvx_t$ given data (\eg $\rvx_0$ in the case of diffusion models and $(\rvx_0, \rvx_T)$ in our case), and given $\rvx_t$, the model is trained to match against the closed-form denoising score objective. We are motivated to follow a similar approach because (1) tractable marginal sampling of $\rvx_t$ and (2) closed-form objectives enable a simple and scalable algorithm. We specify how to design the marginal sampling distribution and the tractable score objective below to approximate the ground-truth conditional score $\gradnd{\log q(\rvx_t\mid \rvx_T)}{\rvx_t}$.

\begin{table}[t]
    \centering
\resizebox{0.98\linewidth}{!}{
    \begin{tabular}{cccccc}
    \toprule
     & $\rvf(\rvx_t, t)$ & $g^2(t)$ & $p(\rvx_t\mid \rvx_0)$ & $\text{SNR}_t$ & $\nabla_{\rvx_t}\log p(\rvx_T\mid \rvx_t)$ \\
     \midrule
        VP & $\frac{d\log\alpha_t }{dt}\rvx_t$ & $\frac{d}{dt}\sigma_t^2 - 2\frac{d\log\alpha_t}{dt}  \sigma_t^2$ & $\gN(\alpha_t \rvx_0, \sigma_t^2\mI)$ & $\alpha_t^2/\sigma_t^2$ & $\frac{(\alpha_t /\alpha_T) \rvx_T - \rvx_t}{\sigma_t^2 (\text{SNR}_t/\text{SNR}_T - 1)}$\\
     \midrule
         VE & $\vzero$ & $\frac{d}{dt}\sigma_t^2$ & $\gN(\rvx_0, \sigma_t^2\mI)$& $1/\sigma_t^2$ & $\frac{\rvx_T - \rvx_t}{ \sigma_T^2 - \sigma_t^2}$ \\
     \bottomrule
    \end{tabular}
    }
    \caption{VP and VE instantiations of diffusion bridges.}
    \label{tab:bridge_ins}
    \vspace{-5mm}
\end{table}

\mypara{Sampling distribution.} Fortunately, for the former condition, we can design our sampling distribution $q(\cdot)$ such that $q(\rvx_t\mid \rvx_0, \rvx_T) \defeq p(\rvx_t\mid \rvx_0, \rvx_T )$, where $p(\cdot)$ is the diffusion distribution pinned at both endpoints as in \equref{eq:doob}. For diffusion processes with Gaussian transition kernels, \eg VE, VP~\citep{song2020score}, our sampling distribution is a Gaussian distribution of the form
\begin{equation}\label{eq:marginal}
\begin{aligned}
    q(\rvx_t\mid \rvx_0, \rvx_T) &= \gN(\hat{\mu}_t, \hat{\sigma}_t^2\bm{I}),\quad \text{where}\\ 
    \hat{\mu}_t&= \frac{\text{SNR}_T}{\text{SNR}_t} \frac{\alpha_t}{\alpha_T}\rvx_T + \alpha_t \rvx_0(1-\frac{\text{SNR}_T}{\text{SNR}_t})\\
    \hat{\sigma}_t^2&=\sigma_t^2(1-\frac{\text{SNR}_T}{\text{SNR}_t})
\end{aligned}
\end{equation}
where $\alpha_t$ and $\sigma_t$ are pre-defined signal and noise schedules and $\normalfont\text{SNR}_t = \alpha_t^2/\sigma_t^2$ is the signal-to-noise ratio at time $t$. For VE schedule, we assume $\alpha_t=1$ and derivation details are provided in \appref{sec:marginal-proof}. Notably, the mean of this distribution is a linear interpolation between the (scaled) endpoints, and the distribution approaches a Dirac distribution when nearing either end. For concreteness, we present the bridge processes generated by both VP and VE diffusion in \tabref{tab:bridge_ins} and recommend choosing $\rvf$ and $g$ specified therein.

\mypara{Training objective.} For the latter condition, diffusion bridges benefit from a similar setup as in diffusion models, since a pre-defined signal/noise schedule gives rise to a closed-form conditional score $\gradnd{\log q(\rvx_t\mid\rvx_0, \rvx_T)}{\rvx_t}$. We show in the following theorem that with $\rvx_t\sim q(\rvx_t \mid \rvx_0, \rvx_T)$, a neural network $\rvs_\theta(\rvx_t, \rvx_T, t)$ that matches against this closed-form score approximates the true score. 
\begin{restatable}[Denoising Bridge Score Matching]{theorem}{naiveobj}\label{thm:naiveobj}
Let $(\rvx_0, \rvx_T)\sim \pdata(\rvx, \rvy)$, $\rvx_t\sim q(\rvx_t\mid \rvx_0, \rvx_T)$, $t\sim p(t)$ for any non-zero time sampling distribution $p(t)$ in $[0,T]$, and $w(t)$ be a non-zero loss weighting term of any choice. Minimum of the following objective:
\begin{align}\label{eq:naive-obj}
\mathcal{L}(\theta) &= \E_{\rvx_t, \rvx_0, \rvx_T, t}\Big[ w(t) \norm{\rvs_\theta(\rvx_t, \rvx_T, t) - \gradnd{\log q(\rvx_t\mid\rvx_0, \rvx_T)}{\rvx_t}}^2 \Big]
\end{align}
satisfies $\rvs_\theta(\rvx_t, \rvx_T, t) = \gradnd{\log q(\rvx_t\mid \rvx_T)}{\rvx_t}$.
\end{restatable}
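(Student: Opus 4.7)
The plan is to adapt the classical denoising score matching argument of Vincent (2011) to the bridge setting. For each fixed $t$ and $\rvx_T=y$ (and the positive weighting $w(t)$ pulled outside), it suffices to show that the conditional objective
\[
J_1(\theta) = \E_{\rvx_0,\rvx_t\mid \rvx_T=y}\bigl[\,\norm{\rvs_\theta(\rvx_t,y,t)-\gradnd{\log q(\rvx_t\mid\rvx_0,\rvx_T)}{\rvx_t}}^2\bigr]
\]
and the ``ideal'' objective
\[
J_2(\theta) = \E_{\rvx_t\mid \rvx_T=y}\bigl[\,\norm{\rvs_\theta(\rvx_t,y,t)-\gradnd{\log q(\rvx_t\mid\rvx_T)}{\rvx_t}}^2\bigr]
\]
differ only by a quantity independent of $\theta$. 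Since $J_2$ is manifestly minimized (pointwise in $\rvx_t,y,t$) by $\rvs_\theta(\rvx_t,y,t)=\gradnd{\log q(\rvx_t\mid\rvx_T)}{\rvx_t}\big|_{\rvx_T=y}$, the full weighted, integrated loss $\mathcal{L}(\theta)$ shares that minimizer as long as $p(t)>0$ and $w(t)\neq 0$ throughout $[0,T]$.

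The core calculation is the cross term. Expanding the square in $J_1$, the only $\theta$-dependent piece besides $\E\|\rvs_\theta\|^2$ is
\[
-2\int\!\!\int \rvs_\theta(\rvx_t,y,t)\cdot \gradnd{\log q(\rvx_t\mid\rvx_0,y)}{\rvx_t}\,q(\rvx_t\mid\rvx_0,y)\,\pdata(\rvx_0\mid y)\,d\rvx_0\,d\rvx_t.
\]
Rewriting $\gradnd{\log q(\rvx_t\mid\rvx_0,y)}{\rvx_t}\,q(\rvx_t\mid\rvx_0,y)=\gradnd{q(\rvx_t\mid\rvx_0,y)}{\rvx_t}$, pulling the gradient outside the $\rvx_0$ integral (this is the one place I must invoke a dominated-convergence / Leibniz-rule style assumption to swap $\nabla_{\rvx_t}$ and $\int d\rvx_0$, which is justified by the Gaussian conditional $q(\rvx_t\mid\rvx_0,\rvx_T)$ from \eqref{eq:marginal} having smooth integrable densities), and recognizing $\int q(\rvx_t\mid\rvx_0,y)\pdata(\rvx_0\mid y)\,d\rvx_0=q(\rvx_t\mid\rvx_T=y)$ recovers $\gradnd{q(\rvx_t\mid y)}{\rvx_t}$. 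Converting back to a log-gradient yields exactly the cross term of $J_2$.

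Putting the pieces together, $J_1(\theta)=J_2(\theta)+C(y,t)$ where $C(y,t)$ depends only on the data distribution and bridge kernel, not on $\theta$. Re-introducing the expectation over $y\sim\pdata(\rvy)$, the non-zero weighting $w(t)$, and the strictly positive time distribution $p(t)$, the same identity holds for the full loss: $\mathcal{L}(\theta)=\tilde{\mathcal{L}}(\theta)+\bar C$. Hence the unique (assuming sufficient expressivity of the network class) minimizer satisfies $\rvs_\theta^\star(\rvx_t,\rvx_T,t)=\gradnd{\log q(\rvx_t\mid \rvx_T)}{\rvx_t}$.

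The main obstacle, though mild, is the interchange of differentiation and integration in the cross-term manipulation, together with verifying that $q(\rvx_t\mid \rvx_T)=\int q(\rvx_t\mid\rvx_0,\rvx_T)\pdata(\rvx_0\mid\rvx_T)d\rvx_0$ is well-defined and differentiable in $\rvx_t$ away from $t=T$ (where \thmref{thm:revbridge} already restricts attention to $t\le T-\epsilon$). The Gaussian form of $q(\rvx_t\mid\rvx_0,\rvx_T)$ in \eqref{eq:marginal} makes these regularity conditions straightforward to check, so the rest of the argument is a direct port of the standard denoising score matching identity to the bridge kernel.
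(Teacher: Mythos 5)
Your argument is correct, and it reaches the result by a slightly different decomposition than the paper. You follow the classical Vincent-style denoising-score-matching route: expand both the bridge-conditional objective $J_1$ and the ``ideal'' objective $J_2$, show the cross terms coincide after writing $q\,\nabla_{\rvx_t}\log q = \nabla_{\rvx_t} q$, exchanging $\nabla_{\rvx_t}$ with $\int d\rvx_0$, and using $\int q(\rvx_t\mid\rvx_0,y)\,\pdata(\rvx_0\mid y)\,d\rvx_0 = q(\rvx_t\mid y)$, and conclude $J_1 = J_2 + C(y,t)$ so the two losses share minimizers. The paper instead characterizes the minimizer of the weighted $L^2$ loss directly as the conditional expectation of the regression target given $(\rvx_t,\rvx_T,t)$, i.e. $\rvs^*(\rvx_t,\rvx_T,t)=\int \frac{q(\rvx_t\mid\rvx_0,\rvx_T)\pdata(\rvx_0,\rvx_T)}{q(\rvx_t,\rvx_T)}\nabla_{\rvx_t}\log q(\rvx_t\mid\rvx_0,\rvx_T)\,d\rvx_0$, and then performs exactly the same cancellation and marginalization to identify this with $\nabla_{\rvx_t}\log q(\rvx_t\mid\rvx_T)$. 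The mathematical core --- canceling the conditional density against $\nabla q/q$ and pulling the gradient out of the $\rvx_0$-integral --- is identical; your version buys a transparent ``objectives differ by a constant'' statement (and you are more explicit than the paper about the Leibniz-rule/regularity justification for the interchange, which the Gaussian bridge kernel indeed makes harmless away from $t=T$), while the paper's version avoids introducing the auxiliary objective and yields the minimizer formula in one step, with the weighting $w(t)p(t)$ canceling explicitly rather than being argued away per fixed $(y,t)$.
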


In short, we establish a tractable diffusion bridge over two endpoints and, by matching the conditional score of the Gaussian bridge, we can learn the score of the new distribution $q(\rvx_t\mid \rvx_T)$ that satisfies the boundary distribution $\pdata(\rvx, \rvy)$. 

\section{Generalized Parameterization for Distribution Translation}

Building the bridge process upon diffusion process allows us to further adapt many recent advancements in the score network parameterization $\rvs_\theta(\rvx_t, \rvx_T, t)$~\citep{ho2020denoising, song2020score,salimans2022progressive, ho2022imagen, karras2022elucidating}, different noise schedules, and efficient ODE sampling~\citep{song2020denoising, karras2022elucidating, lu2022dpm,lu2022dpm2,zhang2022fast} to our more general framework. Among these works, EDM~\citep{karras2022elucidating} proposes to parameterize the model output to be $D_\theta(\rvx_t, t) = c_{\text{skip}}(t)\rvx_t + c_{\text{out}}(t)F_\theta(c_{\text{in}}(t) \rvx_t, c_{\text{noise}}(t) )$ where $F_\theta$ is a neural network with parameter $\theta$ that predicts $\rvx_0$. 
In a similar spirit, 
we adopt this pred-$\rvx$ parameterization and additionally derive a set of 
scaling functions for distribution translation, which we show is a strict superset.

\mypara{Score reparameterization.} 
Following the sampling distribution proposed in~\eqref{eq:marginal}, a pred-$\rvx$ model can predict bridge score by
\begin{align}
    \nabla_{\rvx_t}\log q(\rvx_t\mid \rvx_T) \approx -\frac{\rvx_t - \Big(\frac{\text{SNR}_T}{\text{SNR}_t} \frac{\alpha_t}{\alpha_T}\rvx_T + \alpha_t D_\theta(\rvx_t,\rvx_T, t)(1-\frac{\text{SNR}_T}{\text{SNR}_t})\Big)}{\sigma_t^2(1-\frac{\text{SNR}_T}{\text{SNR}_t})}
\end{align}

\mypara{Scaling functions and loss weighting.} Following \citet{karras2022elucidating}, and let $a_t = \alpha_t/\alpha_T* \text{SNR}_T/\text{SNR}_t$, $b_t = \alpha_t (1 -  \text{SNR}_T/\text{SNR}_t)$, $c_t = \sigma_t^2 (1- \text{SNR}_T/\text{SNR}_t)$, the scaling functions and weighting function $w(t)$ can be derived to be
\begin{align}
    &c_{\text{in}}(t) = \frac{1}{\sqrt{a_t^2 \sigma_T^2 + b_t^2 \sigma_0^2 + 2a_t b_t
\sigma_{0T} + c_t )}}, \quad
c_{\text{out}}(t) = \sqrt{a_t^2(\sigma_T^2\sigma_0^2 - \sigma_{0T}^2) + \sigma_0^2 c_t} * c_{\text{in}}(t)\\
 &c_{\text{skip}}(t) = \Big(b_t \sigma_0^2 + a_t \sigma_{0T} \Big) * c_{\text{in}}^2(t), \quad  w(t) = \frac{1}{c_{\text{out}}(t)^2}, \quad c_{\text{noise}}(t) = \frac{1}{4}\log{(t)}
\end{align}
where $\sigma_0^2$, $\sigma_T^2$, and $\sigma_{0T}$ denote the variance of $\rvx_0$, variance of $\rvx_T$, and the covariance of the two, respectively. The only additional hyperparameters compared to EDM are $\sigma_T$ and $\sigma_{0T}$, which characterize the distribution of $\rvx_T$ and its correlation with $\rvx_0$. One can notice that in the case of EDM, $\sigma_t = t$, $\sigma_T^2 = \sigma_0^2 + T^2$ because $\rvx_T = \rvx_0 + T \rvepsilon$ for some Gaussian noise $\rvepsilon$, $\sigma_{0T} = \sigma_0^2$, and $\text{SNR}_T/\text{SNR}_t = t^2 / T^2$. One can show that the scaling functions then reduce to those in EDM. We leave details in \appref{sec:edm-proof}.


\mypara{Generalized time-reversal.} Due to the probability flow ODE's resemblance with classifier-guidance~\citep{dhariwal2021diffusion, ho2022classifier}, we can introduce an additional parameter $w$ to set the "strength" of drift adjustment as below.
\begin{align}\label{eq:guidance-ode}
    d\rvx_t = \Big[\rvf(\rvx_t, t) - g^2(t) \Big( \frac{1}{2}\rvs(\rvx_t, t, y, T) - w\rvh(\rvx_t, t, y, T)\Big)\Big]dt,\quad \rvx_T = y
\end{align}
which allows for a strictly wider class of marginal density of $\rvx_t$ generated by the resulting probability flow ODE. We examine the effect of this parameter in our ablation studies.

\section{Stochastic Sampling for Denoising Diffusion Bridges}

\begin{algorithm}[t]
\caption{Denoising Diffusion Bridge Hybrid Sampler}\label{alg:sampler}
\begin{algorithmic}
\State {\bf Input:} model $D_\theta(\rvx_t, t)$, time steps $\{t_i\}_{i=0}^{N}$, max time $T$, guidance strength $w$, step ratio $s$, distribution $\pdata(\rvy)$
\State {\bf Output:} $\rvx_0$
\State {\bf Sample} $\rvx_{N}\sim \pdata(\rvy)$
\For{$i = N,\dots, 1$}

\State {\bf Sample} $\rvepsilon_i \sim \gN(\vzero, \vI)$
\State $\hat{t}_i \gets$ $t_i + s(t_{i-1} - t_i)$
\State $\bm{d}_i\gets$ $-\rvf(\rvx_i, t_i) + g^2(t_i) \Big(\rvs(\rvx_i, t_i, \rvx_N, T) - \rvh(\rvx_i, t_i, \rvx_N, T)\Big)$
\State $\hat{\rvx}_i \gets \rvx_i + \bm{d}_i(\hat{t}_i - t_i) + g(t_i) \sqrt{\hat{t}_i - t_i} \rvepsilon_i$

\State $\hat{\bm{d}}_i \gets -\rvf(\hat{\rvx}_i, \hat{t}_i) + g^2(\hat{t}_i) \Big(\frac{1}{2}\rvs(\hat{\rvx}_i, \hat{t}_i, \rvx_N, T) - w \rvh(\hat{\rvx}_i, \hat{t}_i, \rvx_N, T)\Big)$
\State $\rvx_{i-1} \gets \hat{\rvx}_i + \hat{\bm{d}}_i (t_{i-1} - \hat{t}_i)$
\If{$i \neq 1$}
    \State $\bm{d}'_i \gets -\rvf(\rvx_{i-1}, t_{i-1}) + g^2(t_{i-1}) \Big(\frac{1}{2}\rvs(\rvx_{i-1}, t_{i-1}, \rvx_N, T) - w \rvh(\rvx_{i-1}, t_{i-1}, \rvx_N, T)\Big)$
    \State $\rvx_{i-1} \gets \hat{\rvx}_i + (\frac{1}{2}\bm{d}'_i + \frac{1}{2}\hat{\bm{d}}_i)(t_{i-1} - \hat{t}_i) $
\EndIf
\EndFor
\end{algorithmic}
\end{algorithm}

Although the probability flow ODE allows for one to use fast integration techniques to accelerate the sampling process~\citep{zhang2022fast,song2020denoising,karras2022elucidating}, purely following an ODE path is problematic because diffusion bridges have fixed starting points given as data $\rvx_T = \rvy \sim \pdata(\rvy)$, and following the probability flow ODE backward in time generates a deterministic "expected" path. This can result in "averaged" or blurry outputs given initial conditions. Thus, we are motivated to introduce noise into our sampling process to improve the sampling quality and diversity. 


\mypara{Higher-order hybrid sampler.} Our sampler is built upon prior higher-order ODE sampler in \citep{karras2022elucidating}, which discretizes the sampling steps into $t_N > t_{N-1} > \dots > t_0$ with decreasing intervals (see \appref{sec:discretization} for details).  Inspired by the predictor-corrector sampler introduced by \citet{song2020score}, we additionally introduce a scheduled Euler-Maruyama step which follows the backward SDE in between higher-order ODE steps. This ensures that the marginal distribution at each step approximately stays the same. We introduce additional scaling hyperparameter $s$, which define a step ratio in between $t_{i-1}$ and $t_{i}$ such that the interval $[t_{i} - s(t_{i} - t_{i-1}), t_{i}]$ is used for Euler-Maruyama steps and $[t_{i-1}, t_{i} - s(t_{i} - t_{i-1})]$ is used for Heun steps, as described in \algoref{alg:sampler}.

\section{Related Works and Special Cases} 

\mypara{Diffusion models.} The advancements in diffusion models~\citep{sohl2015deep,ho2020denoising,song2020score} have improved state-of-the-art in image generation and outperformed GANs~\citep{Goodfellow2014GenerativeAN}. The success of diffusion models goes hand in hand with important design choices such as network design~\citep{song2020score,karras2022elucidating,nichol2021improved,hoogeboom2023simple,peebles2023scalable}, improved noise-schedules~\citep{nichol2021improved,karras2022elucidating,peebles2023scalable}, faster and more accurate samplers~\citep{song2020denoising,lu2022dpm,lu2022dpm2,zhang2022fast}, and guidance methods~\citep{dhariwal2021diffusion,ho2022classifier}. Given the large body of literature on diffusion models for unconditional generation, which largely is based on these various design choices, we seek to design our bridge formulation to allow for a seamless integration with this literature. As such, we adopt a time-reversal perspective to directly extend these methods.

\mypara{Diffusion bridges, Sch\"odinger bridges, and Doob's h-transform.}
Diffusion bridges~\citep{sarkka2019applied} are a common tool in probability theory and have been actively studied in recent years in the context of generative modeling~\citep{liu2022let, somnath2023aligned,de2021diffusion,peluchettinon,peluchetti2023diffusion}. \citet{heng2021simulating} explores diffusion bridges conditioned on fixed starting/ending points and learns to simulate the time-reversal of the bridge given an approximation of the score $\nabla_{\rvx_t}\log p(\rvx_t)$. More recently, instead of considering bridges with fixed endpoints, \citet{liu2022let} uses Doob's h-transform to bridge between arbitrary distributions. A forward bridge is learned via score-matching by simulating entire paths during training. In contrast, other works~\citep{somnath2023aligned,peluchettinon}, while also adopting Doob's h-transform, propose simulation-free algorithms for forward-time generation. \citet{delbracio2023inversion} similarly constructs a Brownian Bridge for direct iteration and is successfully applied to image-restoration tasks. Another approach~\cite{de2021diffusion} proposes Iterative Proportional Fitting (IPF) to tractably solve Sch\"odinger Bridge (SB) problems in translating between different distributions. \citet{liu2023i2sb} is built on a tractable class of SB which results in a simulation-free algorithm and has demonstrated strong performance in image translation tasks. More recently, extending SB with IPF, Bridge-Matching~\citep{shi2023diffusion} proposes to use Iterative Markovian Fitting to solve the SB problem. A similar algorithm is also developed by \citet{peluchetti2023diffusion} for distribution translation. A more related work to ours is \citet{li2023bbdm}, which proposes to directly reverse a Brownian Bridge for distribution translation in discrete time.  Our method instead shows how to construct a bridge model from any existing VP and VE diffusion processes in continuous time, and Brownian Bridge (as considered in most previous works) is but a special case of VE bridges. We additionally show that, when implemented correctly, VP bridges can achieve very strong empirical performance. Although similar perspective can be derived using forward-time diffusion as in ~\citet{peluchettinon} which also proposes VE/VP bridge schedules, our framework enjoys additional empirical (reusing diffusion designs) and theoretical (connection with OT-Flow-Matching\citep{lipman2023flow, tong2023improving} and Rectified Flow~\citep{liu2022flow}) benefits.

\mypara{Flow and Optimal Transport} Works based on Flow-Matching~\citep{lipman2023flow, tong2023improving, pooladian2023multisample, tong2023simulation} learn an ODE-based transport map to bridge two distributions. \citet{lipman2023flow} has demonstrated that by matching the velocity field of predefined transport maps, one can create powerful generative models competitive with the diffusion counterparts. Improving this approach, \citet{tong2023improving,pooladian2023multisample} exploit potential couplings between distributions using minibatch simulation-free OT. Rectified Flow~\citep{liu2022flow} directly constructs the OT bridge and uses neural networks to fit the intermediate velocity field. Another line of work uses stochastic interpolants \citep{albergo2023building} to build flow models and directly avoid the use of Doob's h-functions and provide an easy way to construct interpolation maps between distributions. \citet{albergo2023stochastic} presents a general theory with stochastic interpolants unifying flow and diffusion, and shows that a bridge can be constructed from both an ODE and SDE perspctive. Separate from these methods, our model uses a different denoising bridge score-matching loss than this class of models. Constructing from this perspective allows us to extend many existing successful designs of diffusion models (which are not directly applicable to these works) to the bridge framework and push state-of-the-art further for image translation while retaining strong performance for unconditional generation.

\subsection{Special Cases of Denoising Diffusion Bridge Models}

\mypara{Case 1: Unconditional diffusion process~\citep{song2020score}.} For unconditional diffusion processes (which map data to noise), we can first show that the marginal $p(\rvx_t)$ when $p(\rvx_0)=\pdata(\rvx)$ exactly matches that of a regular diffusion process when $\rvx_T\sim \pdata(\rvy\mid\rvx)=\gN(\alpha_T\rvx,\sigma_T^2\mI)$. By taking expectation over $\rvx_T$ in \equref{eq:marginal}, we have
\begin{align}
    p(\rvx_t\mid \rvx_0) = \gN(\alpha_t\rvx_0, \sigma_t\mI)
\end{align}
One can further show that during sampling, \twoequref{eq:bridgesde}{eq:bridgeode} reduce to the reverse SDE and ODE (respectively) of a diffusion process when $\rvx_T$ is sampled from a Gaussian. We leave derivation details to \appref{sec:special-case-proof}. 

\mypara{Case 2: OT-Flow-Matching~\citep{lipman2023flow,tong2023improving} and Rectified Flow~\citep{liu2022flow}.} These works learn to match deterministic dynamics defined through ODEs instead of SDEs. In this particular case, they work with ``straight line" paths defined by $\rvx_T - \rvx_0$.

To see that our framework generalizes this, first let us define a family of diffusion bridges with variance scaled by $c\in (0,1)$ such that  $p(\rvx_t\mid \rvx_0, \rvx_T) = \gN(\hat{\mu}_t, c^2\hat{\sigma}_t^2\bm{I})$ where $\hat{\mu}_t$ and $\hat{\sigma}_t$ are as defined in \equref{eq:marginal}. One can therefore show that with a VE diffusion where $\sigma_t^2 = c^2 t$, given some fixed $\rvx_0$ and $\rvx_t$, \ie $T=1$, and $\rvx_t$ sampled from \equref{eq:marginal},
\begin{align}
    \lim_{c\to 0} \Big[\rvf(\rvx_t, t) - c^2g^2(t) \Big(\frac{1}{2} \gradnd{\log p(\rvx_t\mid\rvx_0, \rvx_1)}{\rvx_t} - \gradnd{\log p(\rvx_1\mid\rvx_t)}{\rvx_t})\Big)\Big]
    = \rvx_1 - \rvx_0
\end{align}
where inside the bracket is the drift of probability flow ODE in \equref{eq:bridgeode} given $\rvx_0$ and $\rvx_1$, and the right hand side is exactly the straight line path term. In other words, these methods learn to match the drift in the bridge probability flow ODE (with a specific VE schedule) in the noiseless limit. The score model can then be matched against $\rvx_T - \rvx_0$, with some additional caveat to handle additional input $\rvx_T$, our framework exactly reduces to that of OT-Flow-Matching and Rectified Flow (details in \appref{sec:special-case-proof}).
\section{Experiments}

In this section we verify the generative capability of \model, and we want to answer the following questions: (1) How well does \model perform in image-to-image translation in pixel space? (2) Can \model perform well in unconditional generation when one side of the bridge reduces to Gaussian distribution? (3) How does the additional design choices introduced affect the final performance? Unless noted otherwise, we use the same VE diffusion schedule as in  EDM for our bridge model by default. We leave further experiment details to \appref{sec:exp-details}.

\begin{figure}[t]
    \centering
    \setlength\tabcolsep{0.6pt}
    \begin{tabular}{c|cccccc}
      \begin{subfigure}[t]{0.138\linewidth}
        \centering
        \includegraphics[width=\linewidth]{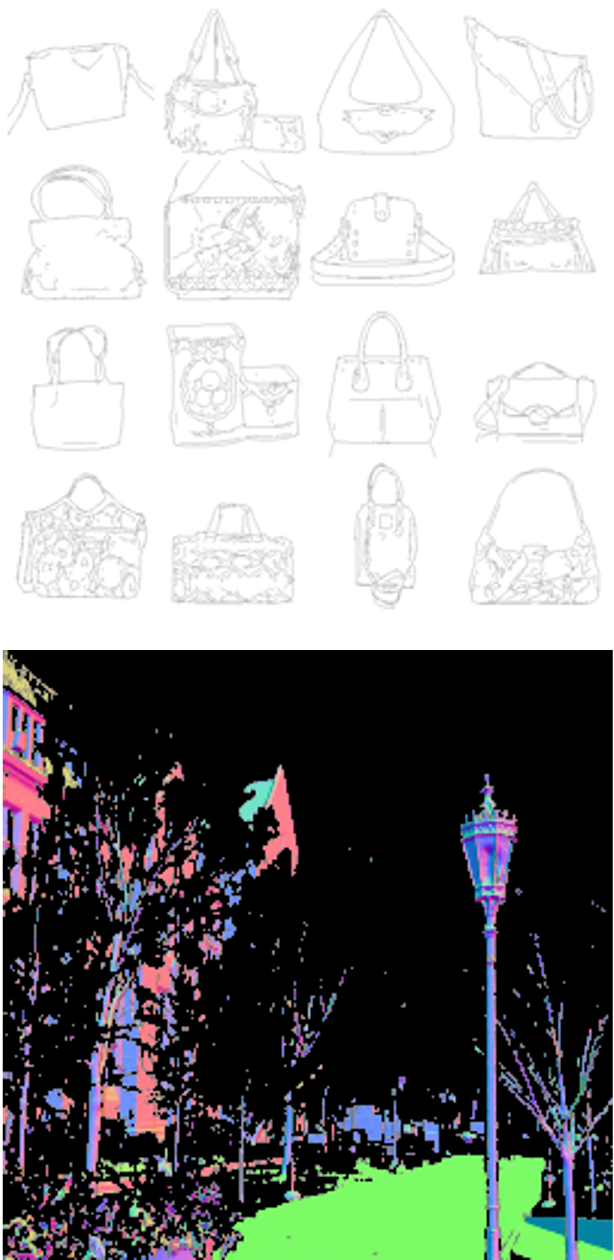}
    \end{subfigure}
    \hspace{0.0002\linewidth} &
    \hspace{0.0002\linewidth} 
    \begin{subfigure}[t]{0.138\linewidth}
        \centering
        \includegraphics[width=\linewidth]{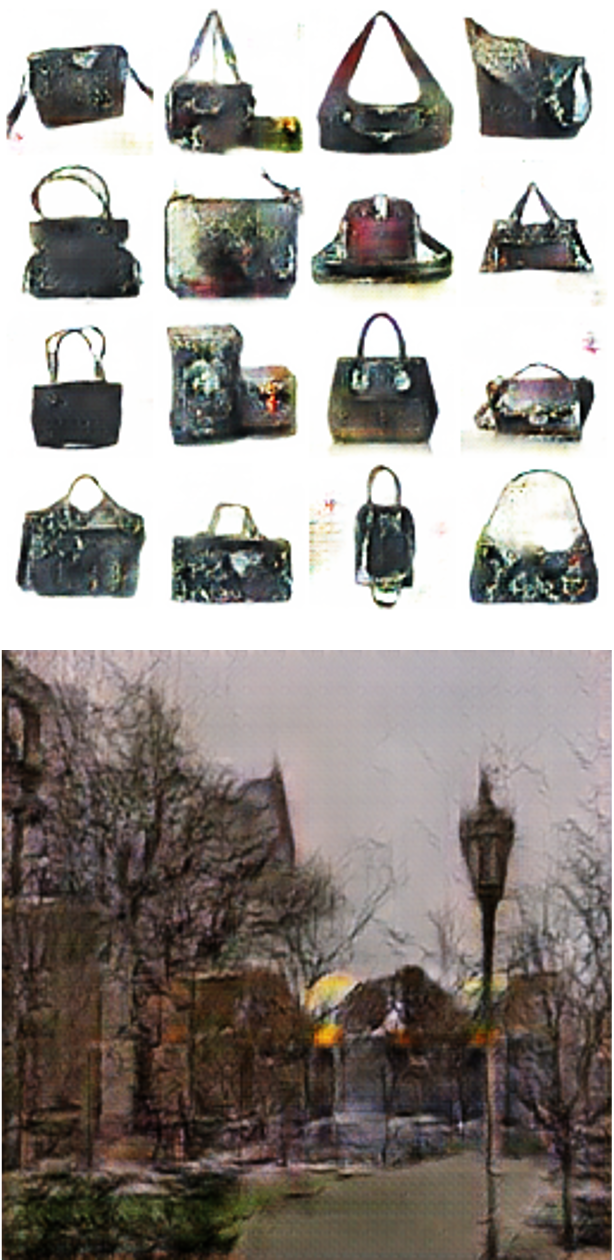}
    \end{subfigure} &
     \begin{subfigure}[t]{0.138\linewidth}
        \centering
        \includegraphics[width=\linewidth]{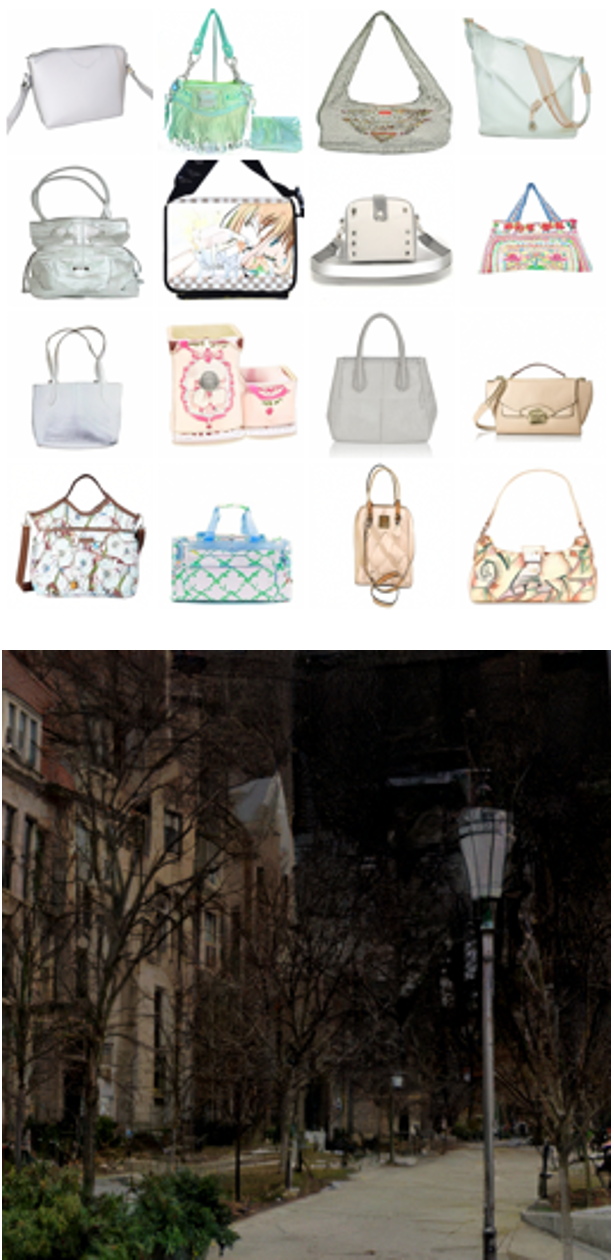}
    \end{subfigure} & 
     \begin{subfigure}[t]{0.138\linewidth}
        \centering
        \includegraphics[width=\linewidth]{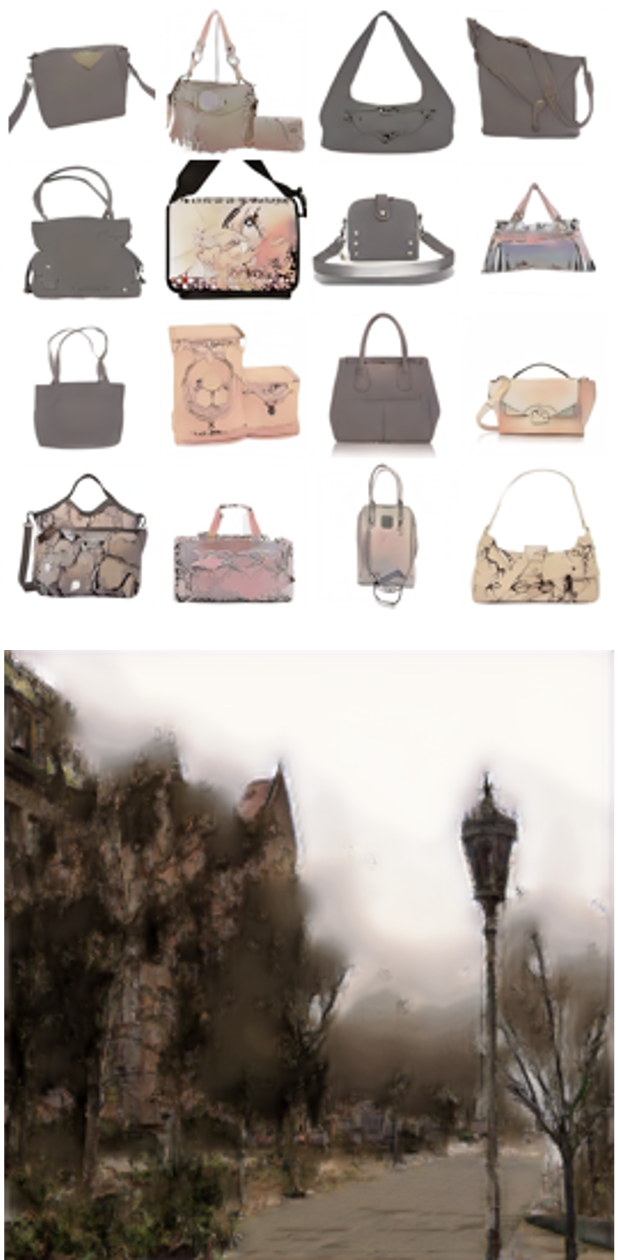}
    \end{subfigure} & 
     \begin{subfigure}[t]{0.138\linewidth}
        \centering
        \includegraphics[width=\linewidth]{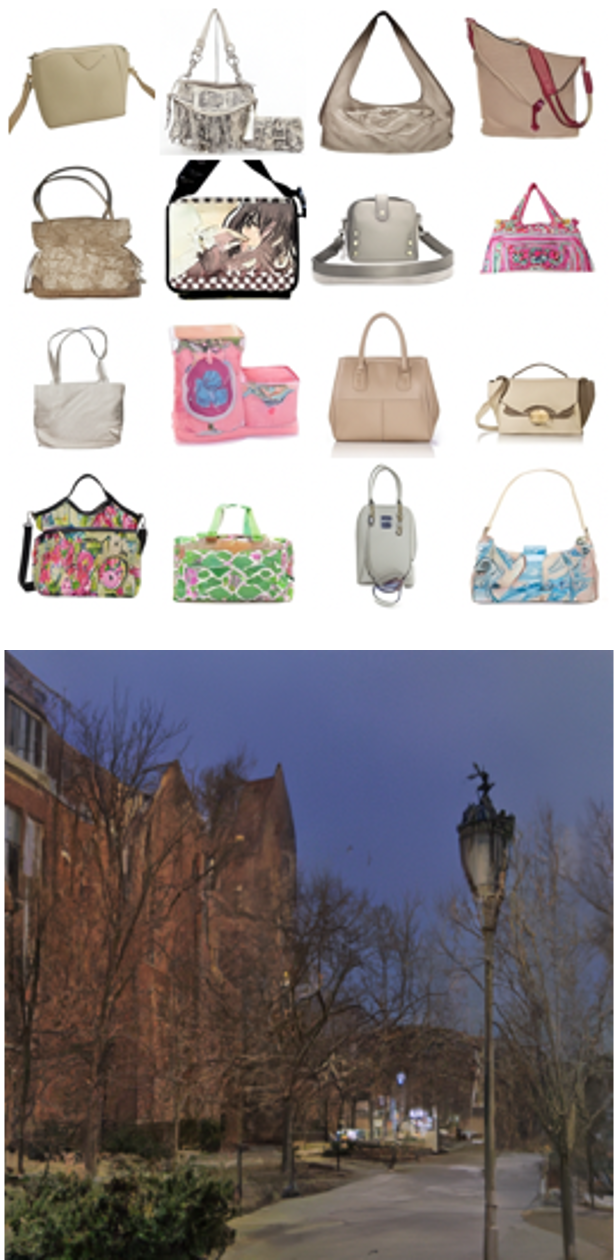}
    \end{subfigure} & 
      \begin{subfigure}[t]{0.138\linewidth}
        \centering
        \includegraphics[width=\linewidth]{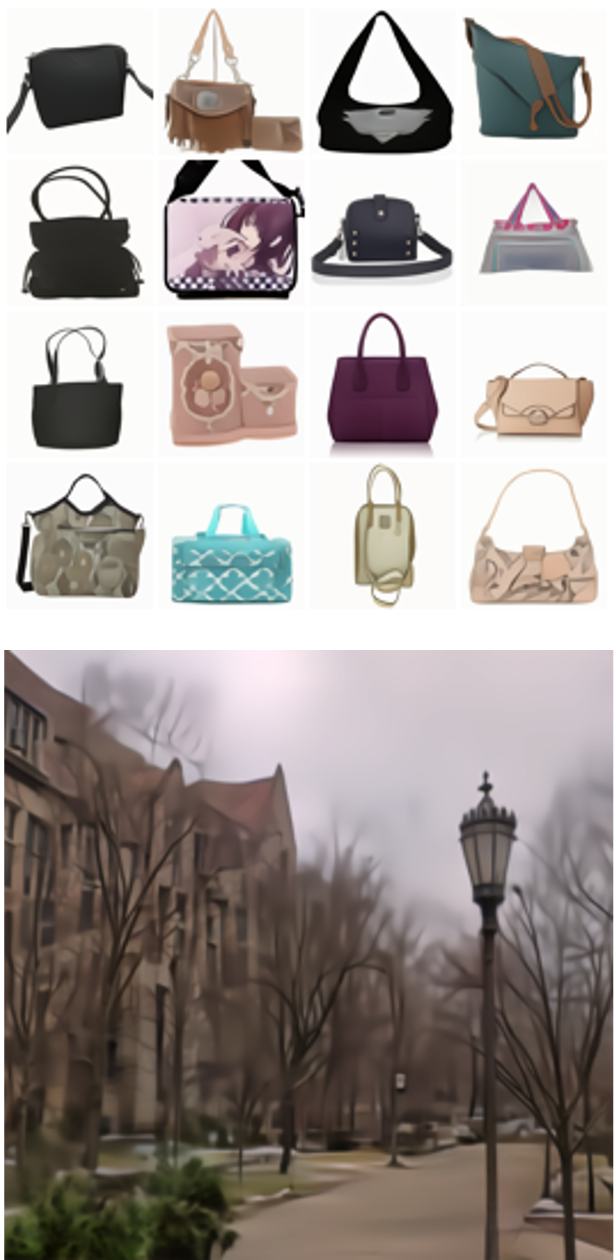}
    \end{subfigure} & 
     \begin{subfigure}[t]{0.138\linewidth}
        \centering
        \includegraphics[width=\linewidth]{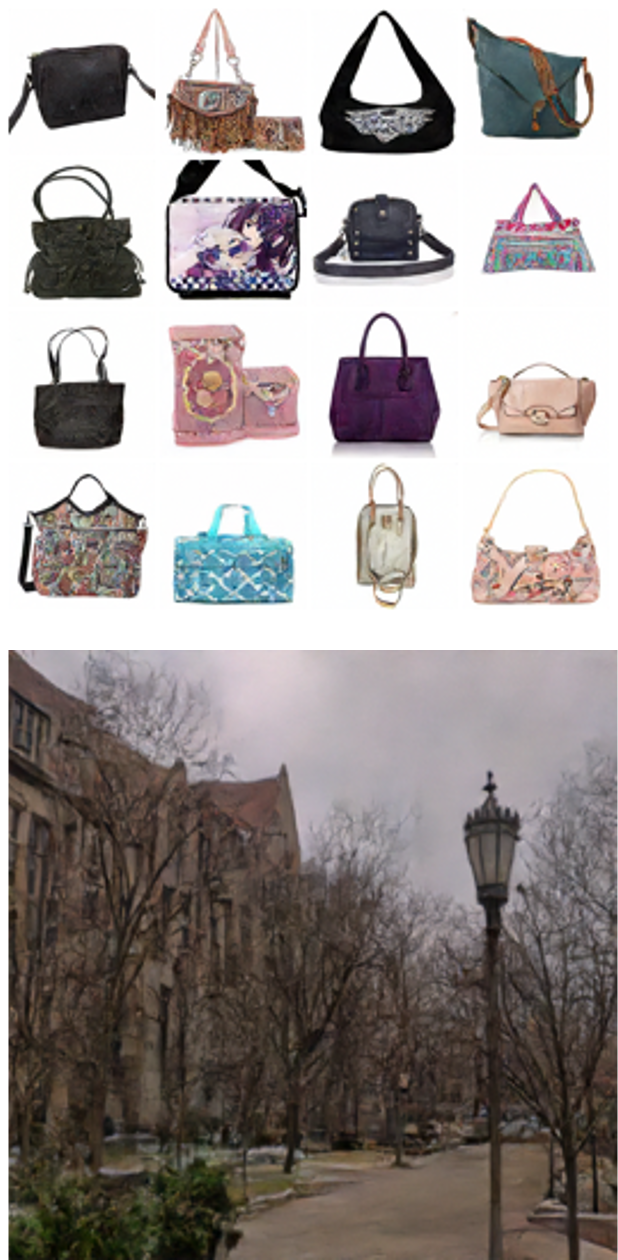}
    \end{subfigure} \\
    Condition & Pix2Pix~\scitenum{isola2017image} & SDEdit~\scitenum{meng2022sdedit} & \thead{Rectified\\ Flow}~\scitenum{liu2022flow}  & I$^2$SB~\scitenum{liu2023i2sb} & \thead{\textbf{DDBM (VE)}, \\ ODE sampler} &  \thead{\textbf{DDBM (VE)}, \\ hybrid sampler}
       
    \end{tabular}
    \vspace{-3mm}
    \caption{Qualitative comparison with the most relevant baselines.}
    \label{fig:e2h-diode}
    \vspace{-2mm}
\end{figure}

\subsection{Image-to-Image Translation}


We demonstrate that \model can deliver competitive results in general image-to-image translation tasks. We evaluate on datasets with different image resolutions to demonstrate its applicability on a variety of scales. We choose Edges$\to$Handbags~\citep{isola2017image}  scaled to $64\times 64$ pixels, which contains image pairs for translating from edge maps to colored handbags, and DIODE-Outdoor~\citep{diode_dataset} scaled to $256\times 256$, which contains normal maps and RGB images of real-world outdoor scenes. For evaluation metrics, we use Fr\'echet Inception Distance (FID)~\citep{heusel2017gans} and Inception Scores (IS)~\citep{barratt2018note} evaluated on all training samples translation quality, and we use LPIPS~\citep{zhang2018perceptual} and MSE (in $[-1, 1]$ scale) to measure perceptual similarity and translation faithfulness. 

We compare with Pix2Pix~\citep{isola2017image}, SDEdit~\citep{meng2022sdedit}, DDIB~\citep{su2022dual}, Rectified Flow~\citep{liu2022flow}, and I$^2$SB~\citep{liu2023i2sb} as they are built for image-to-image translation. For SDEdit we train unconditional EDM on the target domain, \eg colored images, and initialize the translation by noising source image, \eg sketches, and generate by EDM sampler given the noisy image. The other baseline methods are run with their respective repo while using the same network architecture as ours. Diffusion and transport-based methods are evaluated with the same number of function evaluations ($N=40$, which is the default for EDM sampler for $64\times 64$ images) to demonstrate our sampler's effectiveness in the regime when the number of sampling steps are low. Results are shown in \tabref{tab:pix-i2i} and additional settings are specified in \appref{sec:exp-details}.

 We observe that our model can perform translation with both high generation quality and faithfulness, and we find that VP bridges outperform VE bridges in some cases. In contrast, Rectified-Flow as an OT-based method struggles to perform well when the two domains share little low-level similarities (\eg color, hue). DDIB also fails to produce coherent translation due to the wide differences in pixel-space distribution between the paired data. I$^{2}$SB comes closest to our method, but falls short when limited by computational constraints, \ie NFE is low. We additionally show qualitative comparison with the most performant baselines in \figref{fig:e2h-diode}. More visual results can be found in \appref{sec:more-res}.

\subsection{Ablation Studies}

We now study the effect of our preconditioning and hybrid samplers on generation quality in the context of both VE and VP bridge (see \appref{sec:exp-details} for VP bridge parameterization). In the left column of \figref{fig:ablation}, we fix the guidance scale $w$ at 1 and vary the Euler step size $s$ from 0 to 0.9 to introduce stochasticity. We see a significant decrease in FID score as we increase $s$ which produces the best performance at some value between 0 and 1 (\eg $s=0.3$ for Edges$\rightarrow$Handbags). \figref{fig:e2h-diode} also shows that the ODE sampler (\ie $s=0$) produces blurry images while our hybrid sampler produces considerably sharper results. On the right column, we study the effect of $w$ (from 0 to 1) with fixed $s$. We observe that VE bridges are not affected by the change in $w$ whereas VP bridges heavily rely on setting $w=1$. We hypothesize that this is due to the fact that VP bridges follow "curved paths" and destroy signals in between, so it is reliant on Doob's $h$-function for further guidance towards correct probability distribution.

We also study the effect of our preconditioning in \tabref{tab:ablation}. Our baseline without our preconditioning and our sampler is a simple model that directly matches output of the neural network to the training target and generates using EDM~\citep{karras2022elucidating} sampler. We see that each introduced component further boosts the generation performance. Therefore, we can conclude that the introduced practical components are essential for the success of our \model.


\begin{table}[t]

\centering

\resizebox{0.98\linewidth}{!}{
\begin{tabular}{lcccccccc}
\toprule
  &   \multicolumn{4}{c}{Edges$\to$Handbags-64$\times$64}  & \multicolumn{4}{c}{DIODE-256$\times$256}  \\
\cmidrule(r){2-5} \cmidrule(l){6-9}
& FID $\downarrow$ & IS $\uparrow$ & LPIPS $\downarrow$ & MSE $\downarrow$ & FID $\downarrow$ & IS $\uparrow$ & LPIPS $\downarrow$& MSE $\downarrow$  \\
\midrule
Pix2Pix~\citep{isola2017image} & 74.8 & 4.24 & 0.356 & 0.209 & 82.4 & 4.22 & 0.556 &  0.133 \\
\midrule
DDIB~\citep{su2022dual} &186.84  &   2.04& 0.869 & 1.05 & 242.3&  4.22 & 0.798 & 0.794 \\
\midrule
SDEdit~\citep{meng2022sdedit} & 26.5 &  \textbf{3.58} & 0.271 & 0.510 & 31.14 &  5.70& 0.714  &  0.534\\
\midrule
Rectified Flow~\citep{liu2022flow} &  25.3& 2.80 & 0.241 & 0.088 & 77.18 & 5.87 & 0.534 &0.157 \\
\midrule
I$^{2}$SB~\citep{liu2023i2sb} & 7.43 & 3.40 & 0.244  &0.191& 9.34 & 5.77 & 0.373 & 0.145 \\
\midrule
\midrule
\textbf{\model (VE)} &  2.93 & 3.58  & \textbf{0.131} &  \textbf{0.013} & 8.51 & 6.03 &  \textbf{0.226} & \textbf{0.0107} \\
\textbf{\model (VP)} & \textbf{1.83} &  \textbf{3.73} & 0.142 & 0.0402 & \textbf{4.43} & \textbf{6.21} &  0.244 & 0.0839 \\
\bottomrule
\end{tabular}
}
\caption{Quantitative evaluation of pixel-space image-to-image translation.}
\label{tab:pix-i2i}
\vspace{-5mm}
\end{table}

\begin{figure}[h]
    \vspace{-5mm}
    \centering
    \parbox{0.49\linewidth}{
    \begin{tabular}{c}
        \begin{subfigure}[t]{\linewidth}
        \centering
        \caption{Edges$\rightarrow$Handbags}
        \vspace{1mm}
        \includegraphics[width=\linewidth]{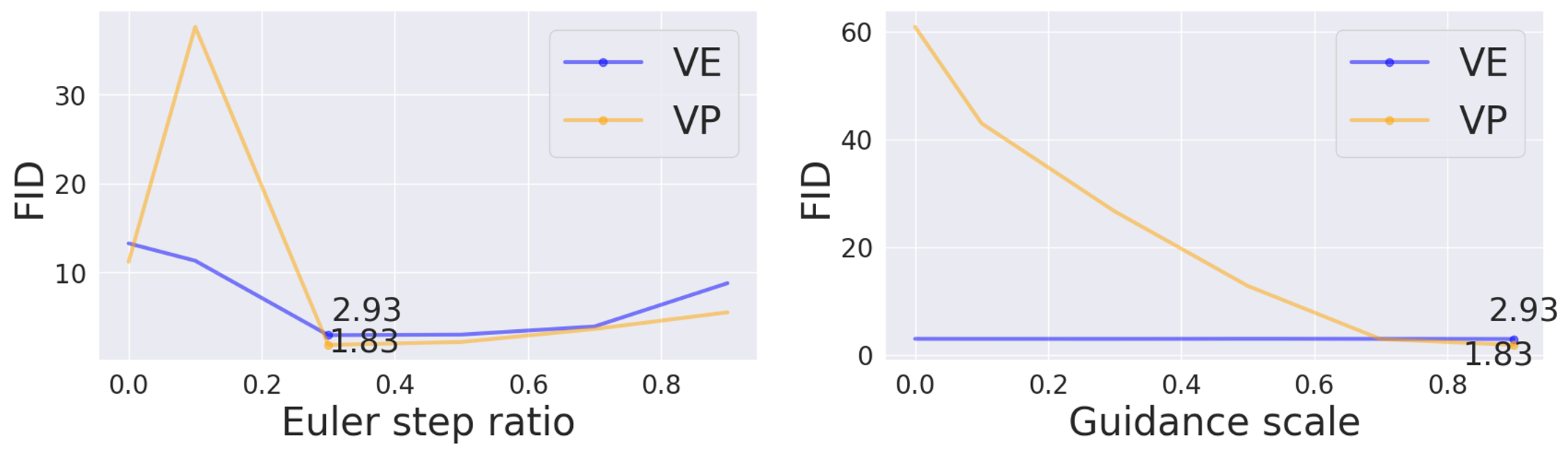}
    \end{subfigure} \\
          \begin{subfigure}[t]{\linewidth}
        \centering
        \includegraphics[width=\linewidth]{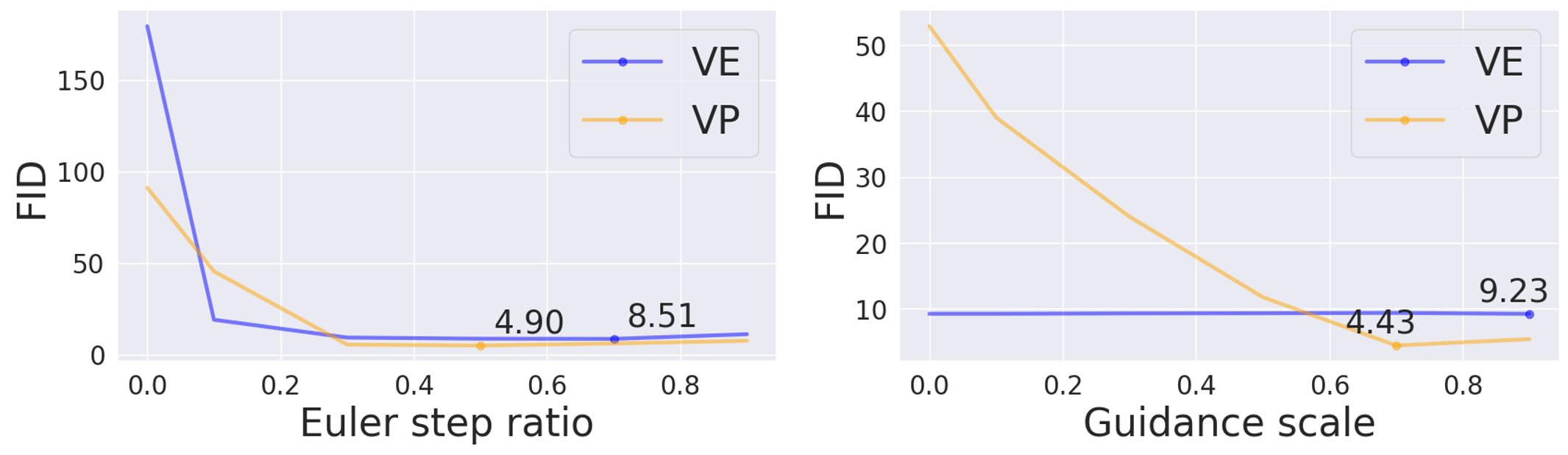}
        \vspace{-5mm}
        \caption{DIODE}
    \end{subfigure}
    \end{tabular}
    \caption{Ablation studies on Euler step ratio $s$ and guidance scale $w$: $w=1$ for all ablation on $s$ and $s$ is set to the best-performing value for each dataset for ablation on $w$. }
    \label{fig:ablation}
    }
    \hfill
\parbox{.48\linewidth}{
\resizebox{\linewidth}{!}{
\begin{tabular}{cccccc}
\toprule
 \multirow{2}{*}{\thead{Our\\ precond.}} & \multirow{2}{*}{\thead{Our\\ sampler}} &   \multicolumn{2}{c}{E$\to$H-64$\times$64}  & \multicolumn{2}{c}{DIODE-256$\times$256}  \\
\cmidrule(r){3-4} \cmidrule(l){5-6}
& & VE & VP &   VE & VP  \\
\midrule
\xmark & \xmark  & 14.02 &  11.76 & 126.3 & 96.93  \\
\midrule
\cmark & \xmark  & 13.26 & 11.19 & 79.25 & 91.07 \\
\midrule
\xmark & \cmark    & 13.11 & 29.91  &  91.31 & 21.92 \\
\midrule
\cmark & \cmark   & \textbf{2.93} & \textbf{1.83}  &  \textbf{8.51} & \textbf{4.43} \\
\bottomrule
\end{tabular}}
\captionof{table}{Ablation study on the effect of sampler and preconditioning on FID. Cross mark on our preconditioning means no output reparameterization and directly use network output to match training target. Cross mark on our sampler means we reuse the ODE sampler from EDM with the same setting. E$\rightarrow$H is a short-hand for Edges$\rightarrow$Handbags.}
\label{tab:ablation}
}
\end{figure}

\begin{figure}[t]
\centering
\parbox{0.51\linewidth}{
\begin{subfigure}[t]{\linewidth}
     \includegraphics[width=\linewidth]{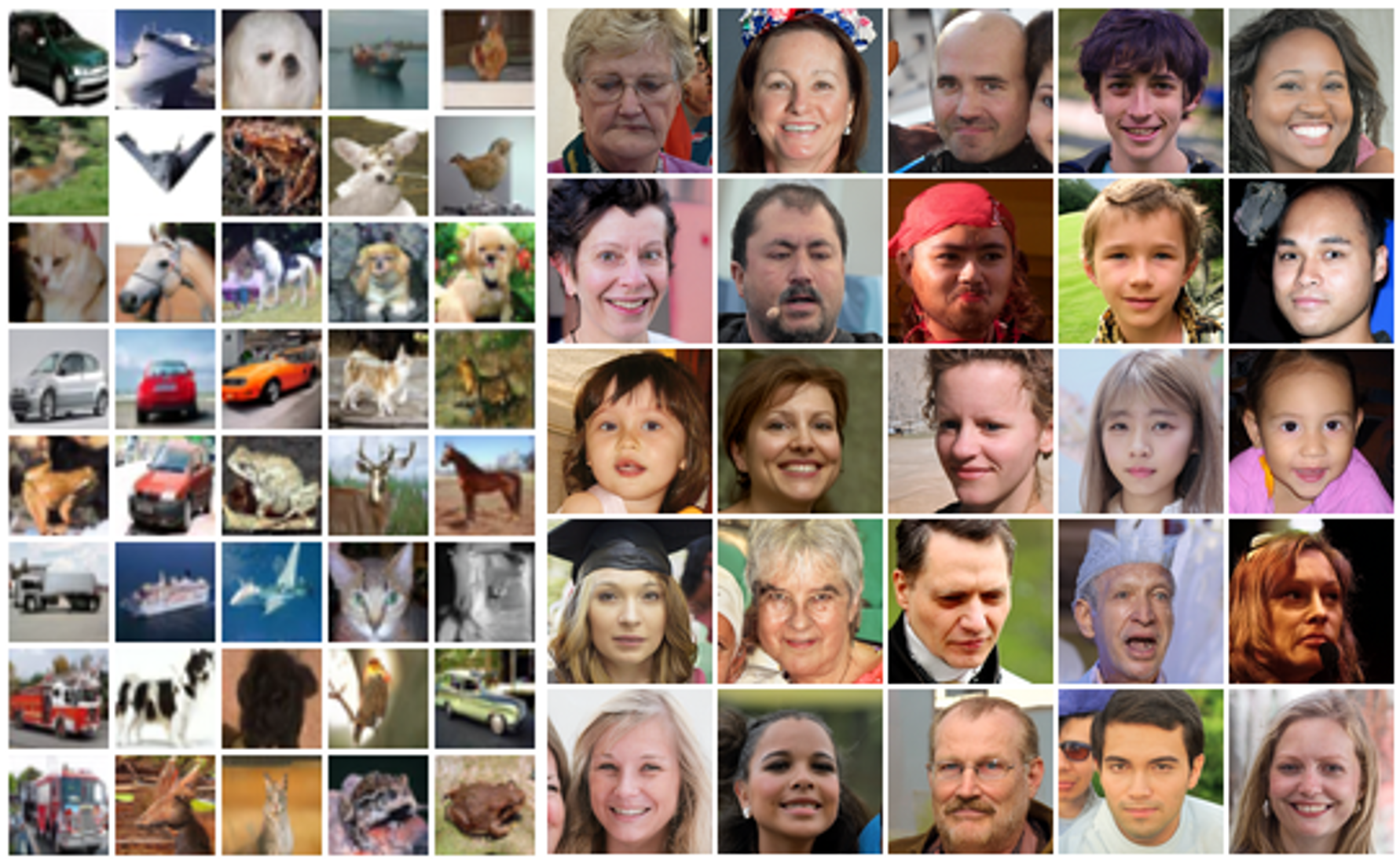}
\end{subfigure}
\caption{Generation on CIFAR-10 and FFHQ-$64\times 64$.}
    \label{fig:gen}
}
\hfill
\parbox{.47\linewidth}{
\vspace{-2mm}
\resizebox{\linewidth}{!}{
\begin{tabular}{lcccc}
\toprule
  &   \multicolumn{2}{c}{CIFAR-10} & \multicolumn{2}{c}{FFHQ-$64\times64$ }\\
  \cmidrule{2-5}
  &NFE $\downarrow$ & FID $\downarrow$ & NFE $\downarrow$& FID $\downarrow$\\
\midrule
DDPM~\scitenum{ho2020denoising} & 1000 & 3.17 & 1000 & 3.52 \\
\midrule
DDIM~\scitenum{song2020denoising} & 50 & 4.67 & 50 & 5.18 \\
\midrule
DDPM++~\scitenum{song2020score} & 1000 & 3.01 & 1000 & 3.39 \\
\midrule
NCSN++~\scitenum{song2020score} & 1000 & 3.77 & 1000 & 25.95 \\
\midrule
Rectified Flow~\scitenum{liu2022flow} & 127 & 2.58 & 152 &  4.45    \\
\midrule
EDM~\scitenum{karras2022elucidating} & 35 & \textbf{2.04} & 79 & 2.53 \\
\midrule
\midrule
\textbf{\model} & 35 & 2.06 & 79 & \textbf{2.44}   \\
\bottomrule
\end{tabular}}
\captionof{table}{Evaluation of unconditional generation.}
\label{tab:genresults}
}

\vspace{-4mm}
\end{figure}

\subsection{Unconditional Generation}

When one side of the distribution becomes Gaussian distribution, our framework exactly reduces to that of diffusion models. Specifically, during training when the end point $\rvx_T\sim\gN(\alpha_T\rvx_0, \sigma_T^2\mI)$, our intermediate bridge samples $\rvx_t$ follows the distribution $\rvx_t\sim\gN(\alpha_t\rvx_0, \sigma_t^2\mI)$.  We empirically verify that using our bridge sampling and the pred-$\rvx$ objective inspired by EDM, we can recover its performance by using our more generalized parameterization. 

We evaluate our method on CIFAR-10~\citep{krizhevsky2009learning} and FFHQ-$64\times 64$~\citep{karras2019style} which are processed according to \citet{karras2022elucidating}. We use FID score for quantitative evaluation using 50K generated images and use number of function evaluations (NFE) for generation efficiency. 
We compare our generation results against diffusion-based and optimal transport-based models including DDPM~\citep{ho2020denoising}, 
DDIM~\citep{song2020denoising},
DDPM++~\citep{song2020score}, 
NCSN++~\citep{song2020score}, Rectified Flow~\citep{liu2022flow}, EDM~\citep{karras2022elucidating}. Quantitative results are presented in \tabref{tab:genresults} and generated samples are shown in \figref{fig:gen}.

We observe that our model is able to match EDM performance with negligible degradation in FID scores for CIFAR-10 and marginal improvement for FFHQ-$64\times 64$. This corroborates our claim that our method can benefit from advances in diffusion models and generalize many of the advanced parameterization techniques such as those introduced in EDM. 
\section{Conclusion}

In this work, we introduce Denoising Diffusion Bridge Models, a novel class of models that builds a stochastic bridge between paired samples with tractable marginal distributions in between. 
The model is learned by matching the conditional score of a tractable bridge distribution, which allows one to transport from one distribution to another via a new reverse SDE or probability flow ODE. Additionally, this generalized framework shares many similarities with diffusion models, thus allowing us to reuse and generalize many designs of diffusion models.
We believe that \model is a significant contribution towards a general framework for distribution translation. In the era of generative AI, \model has a further role to play.

\clearpage
{\small
\bibliographystyle{plainnat}
\bibliography{egbib}

\begin{thebibliography}{55}
\providecommand{\natexlab}[1]{#1}
\providecommand{\url}[1]{\texttt{#1}}
\expandafter\ifx\csname urlstyle\endcsname\relax
  \providecommand{\doi}[1]{doi: #1}\else
  \providecommand{\doi}{doi: \begingroup \urlstyle{rm}\Url}\fi

\bibitem[Albergo et~al.(2023)Albergo, Boffi, and Vanden-Eijnden]{albergo2023stochastic}
Michael~S Albergo, Nicholas~M Boffi, and Eric Vanden-Eijnden.
\newblock Stochastic interpolants: A unifying framework for flows and diffusions.
\newblock \emph{arXiv preprint arXiv:2303.08797}, 2023.

\bibitem[Albergo and Vanden-Eijnden(2023)]{albergo2023building}
Michael~Samuel Albergo and Eric Vanden-Eijnden.
\newblock Building normalizing flows with stochastic interpolants.
\newblock In \emph{The Eleventh International Conference on Learning Representations}, 2023.
\newblock URL \url{https://openreview.net/forum?id=li7qeBbCR1t}.

\bibitem[Barratt and Sharma(2018)]{barratt2018note}
Shane Barratt and Rishi Sharma.
\newblock A note on the inception score.
\newblock \emph{arXiv preprint arXiv:1801.01973}, 2018.

\bibitem[De~Bortoli et~al.(2021)De~Bortoli, Thornton, Heng, and Doucet]{de2021diffusion}
Valentin De~Bortoli, James Thornton, Jeremy Heng, and Arnaud Doucet.
\newblock Diffusion schr{\"o}dinger bridge with applications to score-based generative modeling.
\newblock \emph{Advances in Neural Information Processing Systems}, 34:\penalty0 17695--17709, 2021.

\bibitem[Delbracio and Milanfar(2023)]{delbracio2023inversion}
Mauricio Delbracio and Peyman Milanfar.
\newblock Inversion by direct iteration: An alternative to denoising diffusion for image restoration.
\newblock \emph{arXiv preprint arXiv:2303.11435}, 2023.

\bibitem[Delyon and Hu(2006)]{delyon2006simulation}
Bernard Delyon and Ying Hu.
\newblock Simulation of conditioned diffusion and application to parameter estimation.
\newblock \emph{Stochastic Processes and their Applications}, 116\penalty0 (11):\penalty0 1660--1675, 2006.

\bibitem[Dhariwal and Nichol(2021)]{dhariwal2021diffusion}
Prafulla Dhariwal and Alexander Nichol.
\newblock Diffusion models beat gans on image synthesis.
\newblock \emph{Advances in Neural Information Processing Systems}, 34:\penalty0 8780--8794, 2021.

\bibitem[Doob and Doob(1984)]{doob1984classical}
Joseph~L Doob and JI~Doob.
\newblock \emph{Classical potential theory and its probabilistic counterpart}, volume 262.
\newblock Springer, 1984.

\bibitem[Goodfellow et~al.(2014)Goodfellow, Pouget-Abadie, Mirza, Xu, Warde-Farley, Ozair, Courville, and Bengio]{Goodfellow2014GenerativeAN}
Ian~J. Goodfellow, Jean Pouget-Abadie, Mehdi Mirza, Bing Xu, David Warde-Farley, Sherjil Ozair, Aaron~C. Courville, and Yoshua Bengio.
\newblock Generative adversarial nets.
\newblock In \emph{NIPS}, 2014.

\bibitem[Heng et~al.(2021)Heng, De~Bortoli, Doucet, and Thornton]{heng2021simulating}
Jeremy Heng, Valentin De~Bortoli, Arnaud Doucet, and James Thornton.
\newblock Simulating diffusion bridges with score matching.
\newblock \emph{arXiv preprint arXiv:2111.07243}, 2021.

\bibitem[Heusel et~al.(2017)Heusel, Ramsauer, Unterthiner, Nessler, and Hochreiter]{heusel2017gans}
Martin Heusel, Hubert Ramsauer, Thomas Unterthiner, Bernhard Nessler, and Sepp Hochreiter.
\newblock Gans trained by a two time-scale update rule converge to a local nash equilibrium.
\newblock \emph{Advances in neural information processing systems}, 30, 2017.

\bibitem[Ho and Salimans(2022)]{ho2022classifier}
Jonathan Ho and Tim Salimans.
\newblock Classifier-free diffusion guidance.
\newblock \emph{arXiv preprint arXiv:2207.12598}, 2022.

\bibitem[Ho et~al.(2020)Ho, Jain, and Abbeel]{ho2020denoising}
Jonathan Ho, Ajay Jain, and Pieter Abbeel.
\newblock Denoising diffusion probabilistic models.
\newblock \emph{Advances in Neural Information Processing Systems}, 33:\penalty0 6840--6851, 2020.

\bibitem[Ho et~al.(2022)Ho, Chan, Saharia, Whang, Gao, Gritsenko, Kingma, Poole, Norouzi, Fleet, et~al.]{ho2022imagen}
Jonathan Ho, William Chan, Chitwan Saharia, Jay Whang, Ruiqi Gao, Alexey Gritsenko, Diederik~P Kingma, Ben Poole, Mohammad Norouzi, David~J Fleet, et~al.
\newblock Imagen video: High definition video generation with diffusion models.
\newblock \emph{arXiv preprint arXiv:2210.02303}, 2022.

\bibitem[Hoogeboom et~al.(2023)Hoogeboom, Heek, and Salimans]{hoogeboom2023simple}
Emiel Hoogeboom, Jonathan Heek, and Tim Salimans.
\newblock simple diffusion: End-to-end diffusion for high resolution images.
\newblock \emph{arXiv preprint arXiv:2301.11093}, 2023.

\bibitem[Isola et~al.(2017)Isola, Zhu, Zhou, and Efros]{isola2017image}
Phillip Isola, Jun-Yan Zhu, Tinghui Zhou, and Alexei~A Efros.
\newblock Image-to-image translation with conditional adversarial networks.
\newblock In \emph{Proceedings of the IEEE conference on computer vision and pattern recognition}, pages 1125--1134, 2017.

\bibitem[Karras et~al.(2019)Karras, Laine, and Aila]{karras2019style}
Tero Karras, Samuli Laine, and Timo Aila.
\newblock A style-based generator architecture for generative adversarial networks.
\newblock In \emph{Proceedings of the IEEE/CVF conference on computer vision and pattern recognition}, pages 4401--4410, 2019.

\bibitem[Karras et~al.(2022)Karras, Aittala, Aila, and Laine]{karras2022elucidating}
Tero Karras, Miika Aittala, Timo Aila, and Samuli Laine.
\newblock Elucidating the design space of diffusion-based generative models.
\newblock \emph{arXiv preprint arXiv:2206.00364}, 2022.

\bibitem[Kingma et~al.(2021)Kingma, Salimans, Poole, and Ho]{kingma2021variational}
Diederik Kingma, Tim Salimans, Ben Poole, and Jonathan Ho.
\newblock Variational diffusion models.
\newblock \emph{Advances in neural information processing systems}, 34:\penalty0 21696--21707, 2021.

\bibitem[Krizhevsky et~al.(2009)Krizhevsky, Hinton, et~al.]{krizhevsky2009learning}
Alex Krizhevsky, Geoffrey Hinton, et~al.
\newblock Learning multiple layers of features from tiny images.
\newblock 2009.

\bibitem[Li et~al.(2023)Li, Xue, Liu, and Lai]{li2023bbdm}
Bo~Li, Kaitao Xue, Bin Liu, and Yu-Kun Lai.
\newblock Bbdm: Image-to-image translation with brownian bridge diffusion models.
\newblock In \emph{Proceedings of the IEEE/CVF Conference on Computer Vision and Pattern Recognition}, pages 1952--1961, 2023.

\bibitem[Lipman et~al.(2023)Lipman, Chen, Ben-Hamu, Nickel, and Le]{lipman2023flow}
Yaron Lipman, Ricky T.~Q. Chen, Heli Ben-Hamu, Maximilian Nickel, and Matthew Le.
\newblock Flow matching for generative modeling.
\newblock In \emph{The Eleventh International Conference on Learning Representations}, 2023.
\newblock URL \url{https://openreview.net/forum?id=PqvMRDCJT9t}.

\bibitem[Liu et~al.(2023)Liu, Vahdat, Huang, Theodorou, Nie, and Anandkumar]{liu2023i2sb}
Guan-Horng Liu, Arash Vahdat, De-An Huang, Evangelos~A Theodorou, Weili Nie, and Anima Anandkumar.
\newblock I$^2$sb: Image-to-image schr{\"o}dinger bridge.
\newblock \emph{arXiv}, 2023.

\bibitem[Liu et~al.(2022{\natexlab{a}})Liu, Gong, and Liu]{liu2022flow}
Xingchao Liu, Chengyue Gong, and Qiang Liu.
\newblock Flow straight and fast: Learning to generate and transfer data with rectified flow.
\newblock \emph{arXiv preprint arXiv:2209.03003}, 2022{\natexlab{a}}.

\bibitem[Liu et~al.(2022{\natexlab{b}})Liu, Wu, Ye, and Liu]{liu2022let}
Xingchao Liu, Lemeng Wu, Mao Ye, and Qiang Liu.
\newblock Let us build bridges: Understanding and extending diffusion generative models.
\newblock \emph{arXiv preprint arXiv:2208.14699}, 2022{\natexlab{b}}.

\bibitem[Lu et~al.(2022{\natexlab{a}})Lu, Zhou, Bao, Chen, Li, and Zhu]{lu2022dpm}
Cheng Lu, Yuhao Zhou, Fan Bao, Jianfei Chen, Chongxuan Li, and Jun Zhu.
\newblock Dpm-solver: A fast ode solver for diffusion probabilistic model sampling in around 10 steps.
\newblock \emph{arXiv preprint arXiv:2206.00927}, 2022{\natexlab{a}}.

\bibitem[Lu et~al.(2022{\natexlab{b}})Lu, Zhou, Bao, Chen, Li, and Zhu]{lu2022dpm2}
Cheng Lu, Yuhao Zhou, Fan Bao, Jianfei Chen, Chongxuan Li, and Jun Zhu.
\newblock Dpm-solver++: Fast solver for guided sampling of diffusion probabilistic models.
\newblock \emph{arXiv preprint arXiv:2211.01095}, 2022{\natexlab{b}}.

\bibitem[Meng et~al.(2022)Meng, He, Song, Song, Wu, Zhu, and Ermon]{meng2022sdedit}
Chenlin Meng, Yutong He, Yang Song, Jiaming Song, Jiajun Wu, Jun-Yan Zhu, and Stefano Ermon.
\newblock {SDE}dit: Guided image synthesis and editing with stochastic differential equations.
\newblock In \emph{International Conference on Learning Representations}, 2022.

\bibitem[Nichol and Dhariwal(2021)]{nichol2021improved}
Alexander~Quinn Nichol and Prafulla Dhariwal.
\newblock Improved denoising diffusion probabilistic models.
\newblock In \emph{International Conference on Machine Learning}, pages 8162--8171. PMLR, 2021.

\bibitem[Peebles and Xie(2023)]{peebles2023scalable}
William Peebles and Saining Xie.
\newblock Scalable diffusion models with transformers.
\newblock In \emph{Proceedings of the IEEE/CVF International Conference on Computer Vision}, pages 4195--4205, 2023.

\bibitem[Peluchetti()]{peluchettinon}
Stefano Peluchetti.
\newblock Non-denoising forward-time diffusions.

\bibitem[Peluchetti(2023)]{peluchetti2023diffusion}
Stefano Peluchetti.
\newblock Diffusion bridge mixture transports, schr$\backslash$" odinger bridge problems and generative modeling.
\newblock \emph{arXiv preprint arXiv:2304.00917}, 2023.

\bibitem[Pooladian et~al.(2023)Pooladian, Ben-Hamu, Domingo-Enrich, Amos, Lipman, and Chen]{pooladian2023multisample}
Aram-Alexandre Pooladian, Heli Ben-Hamu, Carles Domingo-Enrich, Brandon Amos, Yaron Lipman, and Ricky Chen.
\newblock Multisample flow matching: Straightening flows with minibatch couplings.
\newblock \emph{arXiv preprint arXiv:2304.14772}, 2023.

\bibitem[Ramesh et~al.(2022)Ramesh, Dhariwal, Nichol, Chu, and Chen]{Ramesh2022HierarchicalTI}
Aditya Ramesh, Prafulla Dhariwal, Alex Nichol, Casey Chu, and Mark Chen.
\newblock Hierarchical text-conditional image generation with clip latents.
\newblock \emph{ArXiv}, abs/2204.06125, 2022.

\bibitem[Rogers and Williams(2000)]{rogers2000diffusions}
L~Chris~G Rogers and David Williams.
\newblock \emph{Diffusions, Markov processes and martingales: Volume 2, It{\^o} calculus}, volume~2.
\newblock Cambridge university press, 2000.

\bibitem[Rombach et~al.(2022)Rombach, Blattmann, Lorenz, Esser, and Ommer]{rombach2022high}
Robin Rombach, Andreas Blattmann, Dominik Lorenz, Patrick Esser, and Bj{\"o}rn Ommer.
\newblock High-resolution image synthesis with latent diffusion models.
\newblock In \emph{Proceedings of the IEEE/CVF Conference on Computer Vision and Pattern Recognition}, pages 10684--10695, 2022.

\bibitem[Saharia et~al.(2021)Saharia, Chan, Chang, Lee, Ho, Salimans, Fleet, and Norouzi]{Saharia2021PaletteID}
Chitwan Saharia, William Chan, Huiwen Chang, Chris~A. Lee, Jonathan Ho, Tim Salimans, David~J. Fleet, and Mohammad Norouzi.
\newblock Palette: Image-to-image diffusion models.
\newblock \emph{ACM SIGGRAPH 2022 Conference Proceedings}, 2021.

\bibitem[Salimans and Ho(2022)]{salimans2022progressive}
Tim Salimans and Jonathan Ho.
\newblock Progressive distillation for fast sampling of diffusion models.
\newblock \emph{arXiv preprint arXiv:2202.00512}, 2022.

\bibitem[S{\"a}rkk{\"a} and Solin(2019)]{sarkka2019applied}
Simo S{\"a}rkk{\"a} and Arno Solin.
\newblock \emph{Applied stochastic differential equations}, volume~10.
\newblock Cambridge University Press, 2019.

\bibitem[Schauer et~al.(2017)Schauer, Van Der~Meulen, and Van~Zanten]{schauer2017guided}
Moritz Schauer, Frank Van Der~Meulen, and Harry Van~Zanten.
\newblock Guided proposals for simulating multi-dimensional diffusion bridges.
\newblock 2017.

\bibitem[Shi et~al.(2023)Shi, De~Bortoli, Campbell, and Doucet]{shi2023diffusion}
Yuyang Shi, Valentin De~Bortoli, Andrew Campbell, and Arnaud Doucet.
\newblock Diffusion schr$\backslash$" odinger bridge matching.
\newblock \emph{arXiv preprint arXiv:2303.16852}, 2023.

\bibitem[Sohl-Dickstein et~al.(2015)Sohl-Dickstein, Weiss, Maheswaranathan, and Ganguli]{sohl2015deep}
Jascha Sohl-Dickstein, Eric Weiss, Niru Maheswaranathan, and Surya Ganguli.
\newblock Deep unsupervised learning using nonequilibrium thermodynamics.
\newblock In \emph{International Conference on Machine Learning}, pages 2256--2265. PMLR, 2015.

\bibitem[Somnath et~al.(2023)Somnath, Pariset, Hsieh, Martinez, Krause, and Bunne]{somnath2023aligned}
Vignesh~Ram Somnath, Matteo Pariset, Ya-Ping Hsieh, Maria~Rodriguez Martinez, Andreas Krause, and Charlotte Bunne.
\newblock Aligned diffusion schr$\backslash$" odinger bridges.
\newblock \emph{arXiv preprint arXiv:2302.11419}, 2023.

\bibitem[Song et~al.(2020{\natexlab{a}})Song, Meng, and Ermon]{song2020denoising}
Jiaming Song, Chenlin Meng, and Stefano Ermon.
\newblock Denoising diffusion implicit models.
\newblock \emph{arXiv preprint arXiv:2010.02502}, 2020{\natexlab{a}}.

\bibitem[Song and Ermon(2019)]{song2019generative}
Yang Song and Stefano Ermon.
\newblock Generative modeling by estimating gradients of the data distribution.
\newblock \emph{Advances in neural information processing systems}, 32, 2019.

\bibitem[Song et~al.(2020{\natexlab{b}})Song, Sohl-Dickstein, Kingma, Kumar, Ermon, and Poole]{song2020score}
Yang Song, Jascha Sohl-Dickstein, Diederik~P Kingma, Abhishek Kumar, Stefano Ermon, and Ben Poole.
\newblock Score-based generative modeling through stochastic differential equations.
\newblock \emph{arXiv preprint arXiv:2011.13456}, 2020{\natexlab{b}}.

\bibitem[Su et~al.(2022)Su, Song, Meng, and Ermon]{su2022dual}
Xuan Su, Jiaming Song, Chenlin Meng, and Stefano Ermon.
\newblock Dual diffusion implicit bridges for image-to-image translation.
\newblock In \emph{The Eleventh International Conference on Learning Representations}, 2022.

\bibitem[Szavits-Nossan and Evans(2015)]{szavits2015inequivalence}
Juraj Szavits-Nossan and Martin~R Evans.
\newblock Inequivalence of nonequilibrium path ensembles: the example of stochastic bridges.
\newblock \emph{Journal of Statistical Mechanics: Theory and Experiment}, 2015\penalty0 (12):\penalty0 P12008, 2015.

\bibitem[Tong et~al.(2023{\natexlab{a}})Tong, Malkin, Fatras, Atanackovic, Zhang, Huguet, Wolf, and Bengio]{tong2023simulation}
Alexander Tong, Nikolay Malkin, Kilian Fatras, Lazar Atanackovic, Yanlei Zhang, Guillaume Huguet, Guy Wolf, and Yoshua Bengio.
\newblock Simulation-free schr$\backslash$" odinger bridges via score and flow matching.
\newblock \emph{arXiv preprint arXiv:2307.03672}, 2023{\natexlab{a}}.

\bibitem[Tong et~al.(2023{\natexlab{b}})Tong, Malkin, Huguet, Zhang, Rector-Brooks, Fatras, Wolf, and Bengio]{tong2023improving}
Alexander Tong, Nikolay Malkin, Guillaume Huguet, Yanlei Zhang, Jarrid Rector-Brooks, Kilian Fatras, Guy Wolf, and Yoshua Bengio.
\newblock Improving and generalizing flow-based generative models with minibatch optimal transport.
\newblock In \emph{ICML Workshop on New Frontiers in Learning, Control, and Dynamical Systems}, 2023{\natexlab{b}}.

\bibitem[Vasiljevic et~al.(2019)Vasiljevic, Kolkin, Zhang, Luo, Wang, Dai, Daniele, Mostajabi, Basart, Walter, and Shakhnarovich]{diode_dataset}
Igor Vasiljevic, Nick Kolkin, Shanyi Zhang, Ruotian Luo, Haochen Wang, Falcon~Z. Dai, Andrea~F. Daniele, Mohammadreza Mostajabi, Steven Basart, Matthew~R. Walter, and Gregory Shakhnarovich.
\newblock {DIODE}: {A} {D}ense {I}ndoor and {O}utdoor {DE}pth {D}ataset.
\newblock \emph{CoRR}, abs/1908.00463, 2019.
\newblock URL \url{http://arxiv.org/abs/1908.00463}.

\bibitem[Villani(2008)]{Villani2008OptimalTO}
C{\'e}dric Villani.
\newblock Optimal transport: Old and new.
\newblock 2008.

\bibitem[Zhang and Chen(2022)]{zhang2022fast}
Qinsheng Zhang and Yongxin Chen.
\newblock Fast sampling of diffusion models with exponential integrator.
\newblock \emph{arXiv preprint arXiv:2204.13902}, 2022.

\bibitem[Zhang et~al.(2018)Zhang, Isola, Efros, Shechtman, and Wang]{zhang2018perceptual}
Richard Zhang, Phillip Isola, Alexei~A Efros, Eli Shechtman, and Oliver Wang.
\newblock The unreasonable effectiveness of deep features as a perceptual metric.
\newblock In \emph{CVPR}, 2018.

\bibitem[Zhu et~al.(2017)Zhu, Park, Isola, and Efros]{Zhu2017UnpairedIT}
Jun-Yan Zhu, Taesung Park, Phillip Isola, and Alexei~A. Efros.
\newblock Unpaired image-to-image translation using cycle-consistent adversarial networks.
\newblock \emph{2017 IEEE International Conference on Computer Vision (ICCV)}, pages 2242--2251, 2017.

\end{thebibliography}
}

\newpage
\begin{center}
    \Large\textbf{Appendix: Denoising Diffusion Bridge Models}
\end{center}
\appendix

\section{Proofs}\label{sec:proof}

\subsection{Marginal distribution}\label{sec:marginal-proof}

We note that for tractable transition kernels specified in \tabref{tab:bridge_ins}, we can derive the marginal distribution of $\rvx_t$ using Bayes' rule
$$p(\rvx_t\mid\rvx_0, \rvx_T) = \frac{p(\rvx_T\mid\rvx_t) p(\rvx_t\mid \rvx_0)}{p(\rvx_T\mid\rvx_0)}$$
We can directly derive this by looking at the resulting density function. First,
\begin{align}
    p(\rvx_t\mid \rvx_0) &= \frac{1}{\sqrt{2\pi} \sigma_t} \exp( - \frac{(\rvx_t - \alpha_t\rvx_0)^2}{2\sigma_t^2})\\
     p(\rvx_T\mid \rvx_t) &= \frac{1}{\sqrt{2\pi} \sqrt{\sigma_T^2 - \frac{\alpha_T^2}{\alpha_t^2}\sigma_t^2}} \exp( - \frac{(\frac{\alpha_T}{ \alpha_t}\rvx_t - \rvx_T)^2}{2(\sigma_T^2 - \frac{\alpha_T^2}{\alpha_t^2}\sigma_t^2)})\\ 
     &= \frac{1}{\sqrt{2\pi} \sqrt{\sigma_T^2 - \frac{\alpha_T^2}{\alpha_t^2}\sigma_t^2}} \exp( - \frac{ (\rvx_t - \frac{\alpha_t}{\alpha_T} \rvx_T)^2}{2 \sigma_t^2(\frac{\text{SNR}_t}{\text{SNR}_T} - 1)})\\
     p(\rvx_T\mid \rvx_0) &= \frac{1}{\sqrt{2\pi} \sigma_T} \exp( - \frac{(\rvx_T - \alpha_T\rvx_0)^2}{2\sigma_T^2})
\end{align}
and we refer readers to \citet{kingma2021variational} for details on $p(\rvx_s\mid \rvx_t)$ for any $s>t$. Then we know 
\begin{align}
    &p(\rvx_t\mid\rvx_0, \rvx_T)\\
    =& \frac{1}{\sqrt{2\pi}  \underbrace{\frac{\sigma_t}{\sigma_T} \sqrt{\sigma_T^2 - \frac{\alpha_T^2}{\alpha_t^2}\sigma_t^2}}_{\textstyle\hat{\sigma}_t}} \exp(- \frac{1}{2} \smash[b]{\underbrace{\Big[ \frac{(\rvx_t - \alpha_t\rvx_0)^2}{\sigma_t^2} + \frac{ (\rvx_t - \frac{\alpha_t}{\alpha_T} \rvx_T)^2}{ \sigma_t^2(\frac{\text{SNR}_t}{\text{SNR}_T} - 1)} - \frac{(\rvx_T - \alpha_T\rvx_0)^2}{\sigma_T^2} }_{\textstyle-\frac{\textstyle(\rvx_t - \hat{\mu}_t)^2}{\textstyle2 \hat{\sigma}_t^2 }} } \Big] )
\end{align}
where 
\begin{align}
    \hat{\sigma}_t^2 &= \sigma_t^2(1 - \frac{\text{SNR}_T}{\text{SNR}_t})\\
    \hat{\mu}_t &= \frac{\text{SNR}_T}{\text{SNR}_t} \frac{\alpha_t}{\alpha_T}\rvx_T + \alpha_t \rvx_0(1-\frac{\text{SNR}_T}{\text{SNR}_t})
\end{align}

\subsection{Denoising Bridge Score Matching}
\naiveobj*

\begin{proof}
We can explicitly write the objective as
\begin{equation}
    \resizebox{\hsize}{!}{$\int_{\rvx_t, \rvx_0,\rvx_T,t}q(\rvx_t\mid \rvx_0, \rvx_T)\pdata(\rvx_0,\rvx_T)  w(t) p(t)\Big[ \norm{\rvs_\theta(\rvx_t,\rvx_T, t) - \gradnd{\log q(\rvx_t\mid\rvx_0, \rvx_T)}{\rvx_t}}^2 \Big] d\rvx_t d\rvx_0d\rvx_Tdt $}
\end{equation}
    Since the objective is an $\gL_2$ loss and $p(t),w(t)$ are non-zero, its minimum can be derived as
    \begin{align}
       &\rvs^*(\rvx_t, \rvx_T, t) \nonumber \\
       =& \int_{\rvx_0,t}\frac{q(\rvx_t\mid \rvx_0, \rvx_T)\pdata(\rvx_0,\rvx_T) \cancel{w(t)p(t)}}{\int_{\rvx_0}p(\rvx_t\mid \rvx_0, \rvx_T)\pdata(\rvx_0,\rvx_T)\cancel{ w(t)p(t)} d\rvx_0 } \gradnd{\log q(\rvx_t\mid\rvx_0, \rvx_T)} d\rvx_0 dt \\
       =& \int_{\rvx_0}\frac{q(\rvx_t\mid \rvx_0, \rvx_T)\pdata(\rvx_0,\rvx_T) }{q(\rvx_t, \rvx_T)} \gradnd{\log q(\rvx_t\mid\rvx_0, \rvx_T)}{\rvx_t} d\rvx_0 \\
       =&\int_{\rvx_0}\frac{\cancel{q(\rvx_t\mid \rvx_0, \rvx_T)}\pdata(\rvx_0,\rvx_T) }{q(\rvx_t, \rvx_T)} \frac{\gradnd{q(\rvx_t\mid\rvx_0, \rvx_T)}{\rvx_t}}{\cancel{q(\rvx_t\mid\rvx_0, \rvx_T)}} d\rvx_0 \\
       =&\frac{\gradnd{\int_{\rvx_0} \pdata(\rvx_0,\rvx_T) q(\rvx_t\mid\rvx_0, \rvx_T)}{\rvx_t} d\rvx_0 }{q(\rvx_t ,\rvx_T)}\\
       =&\frac{\gradnd{q(\rvx_t, \rvx_T)}{\rvx_t}}{q(\rvx_t, \rvx_T)}\\
       =&\gradnd{\log q(\rvx_t\mid \rvx_T)}{\rvx_t}
    \end{align}
    Thus, minimizing the objective approximates the conditional score.
\end{proof}

\subsection{Probability Flow ODE of Diffusion Bridges}\label{sec:ode-proof}
\revbridge*
\begin{proof}
To find the time evolution of $q(\rvx_t \mid \rvx_T) = \int_{\rvx_0}p(\rvx_t\mid\rvx_0, \rvx_T)\pdata(\rvx_0\mid \rvx_T)$, we can first find the time evolution of  $p(\rvx_t\mid\rvx_0=x_0, \rvx_T=x_T)$ for fixed endpoints $x_0$ and $x_T$, which by Bayes' rule is
$$p(\rvx_t\mid \rvx_T=x_T,\rvx_0=x_0) = \frac{p(\rvx_T=x_T\mid\rvx_t)p(\rvx_t\mid\rvx_0=x_0)}{p(\rvx_T=x_T\mid\rvx_0=x_0)}$$
where $p(\rvx_t\mid\rvx_0)$ follows Kolmogorov forward equation
\begin{align}
    \parderiv{p(\rvx_t\mid\rvx_0=x_0)}{t} = - \nabla_{\rvx_t}\cdot \Big[\rvf(\rvx_t, t)p(\rvx_t\mid\rvx_0=x_0)\Big] + \frac{1}{2} g^2(t)\nabla_{\rvx_t}\cdot\gradnd{p(\rvx_t\mid\rvx_0=x_0)}{\rvx_t}    
\end{align}
and $p(\rvx_T=x_T\mid\rvx_t)$ follows Kolmogorov backward equation~\cite{szavits2015inequivalence} where
\begin{align}
    -\parderiv{p(\rvx_T=x_T\mid \rvx_t)}{t} = \rvf(\rvx_t, t)\cdot \gradnd{p(\rvx_T=x_T\mid\rvx_t)}{\rvx_t} + \frac{1}{2} g^2(t)\nabla_{\rvx_t}\cdot\gradnd{p(\rvx_T=x_T\mid \rvx_t)}{\rvx_t}    
\end{align}

The time derivative of $p(\rvx_t\mid\rvx_T=x_T,\rvx_0=x_0)$ thus follows
\begin{align}
&\parderiv{p(\rvx_t\mid\rvx_T=x_T,\rvx_0=x_0)}{t} \\
=& \parderiv{\frac{p(\rvx_T=x_T\mid\rvx_t)p(\rvx_t\mid\rvx_0=x_0)}{p(\rvx_T=x_T\mid\rvx_0=x_0)}}{t}\\
=&\underbrace{\frac{p(\rvx_t\mid\rvx_0=x_0)}{p(\rvx_T=x_T\mid\rvx_0=x_0)}\parderiv{p(\rvx_T=x_T\mid\rvx_t)}{t}}_{\circled{1}} + \underbrace{\frac{p(\rvx_T=x_T\mid\rvx_t)}{p(\rvx_T=x_T\mid\rvx_0=x_0)}\parderiv{p(\rvx_t\mid\rvx_0=x_0)}{t}}_{\circled{2}}
\end{align}
Further expanding the right-hand-side, we have
\begin{align*}
\begin{split}
    \circled{1}&=-\frac{p(\rvx_t\mid\rvx_0=x_0)}{p(\rvx_T=x_T\mid\rvx_0=x_0)} \Big(\rvf(\rvx_t, t)\cdot \gradnd{p(\rvx_T=x_T\mid\rvx_t)}{\rvx_t} + \frac{1}{2} g^2(t)\nabla_{\rvx_t}\cdot\gradnd{p(\rvx_T=x_T\mid \rvx_t)}{\rvx_t} \Big)
\end{split}
\end{align*}

\begin{align*}
\begin{split}
    \circled{2}&=\frac{p(\rvx_T=x_T\mid\rvx_t)}{p(\rvx_T=x_T\mid\rvx_0=x_0)}\Big(- \nabla_{\rvx_t}\cdot \Big[\rvf(\rvx_t, t)p(\rvx_t\mid\rvx_0=x_0)\Big] + \frac{1}{2} g^2(t)\nabla_{\rvx_t}\cdot\gradnd{p(\rvx_t\mid\rvx_0=x_0)}{\rvx_t} \Big)
\end{split}
\end{align*}
We can notice that the sum of the first terms of $\circled{1}$ and $\circled{2}$ is the result of a product rule, thus
\begin{equation}\label{eq:ode-reduce-1}
\begin{split}
    \circled{1} + \circled{2} &= -\nabla_{\rvx_t}\cdot\Big[\rvf(\rvx_t, t)p(\rvx_t\mid\rvx_T=x_T,\rvx_0=x_0)\Big] \\&
    +\frac{1}{2}g^2(t)\Bigg(\frac{p(\rvx_T=x_T\mid\rvx_t)}{p(\rvx_T=x_T\mid\rvx_0=x_0)} \nabla_{\rvx_t} \cdot\gradnd{p(\rvx_t\mid\rvx_0=x_0)}{\rvx_t}\\&
    - \frac{p(\rvx_t\mid\rvx_0=x_0)}{p(\rvx_T=x_T\mid\rvx_0=x_0)}\nabla_{\rvx_t}\cdot\gradnd{p(\rvx_T=x_T\mid\rvx_t)}{\rvx_t}\Bigg)
\end{split}
\end{equation}
We now focus on reducing the terms in the last bracket. For clarity, we similarly number the two terms inside the bracket such that
\begin{align}
    \circled{3} &= \frac{p(\rvx_T=x_T\mid\rvx_t)}{p(\rvx_T=x_T\mid\rvx_0=x_0)} \nabla_{\rvx_t} \cdot\gradnd{p(\rvx_t\mid\rvx_0=x_0)}{\rvx_t}\\
    \circled{4} &= \frac{p(\rvx_t\mid\rvx_0=x_0)}{p(\rvx_T=x_T\mid\rvx_0=x_0)}\nabla_{\rvx_t}\cdot\gradnd{p(\rvx_T=x_T\mid\rvx_t)}{\rvx_t}
\end{align}
Now we can complete these terms to be results of product rule by adding and subtracting the following term
\begin{align}
&\frac{\gradnd{p(\rvx_T=x_T\mid\rvx_t)}{\rvx_t}\cdot\gradnd{p(\rvx_t\mid\rvx_0=x_0)}{\rvx_t}}{p(\rvx_T=x_T\mid\rvx_0=x_0)}    \\
&=\underbrace{\frac{\gradnd{p(\rvx_t\mid\rvx_0=x_0)}{\rvx_t}}{p(\rvx_T=x_T\mid\rvx_0=x_0)}\cdot\Big[p(\rvx_T=x_T\mid\rvx_t)\gradnd{\log p(\rvx_T=x_T\mid\rvx_t)}{\rvx_t}\Big]}_{\circled{5}}\\
&=\underbrace{\frac{\gradnd{p(\rvx_T=x_T\mid\rvx_t)}{\rvx_t}}{p(\rvx_T=x_T\mid\rvx_0=x_0)}\cdot\Big[p(\rvx_t\mid\rvx_0=x_0)\gradnd{\log p(\rvx_t\mid\rvx_0=x_0)}{\rvx_t}\Big]}_{\circled{6}}
\end{align}
which takes 2 equivalent forms $\circled{5}$ and $\circled{6}$. Now we can write \equref{eq:ode-reduce-1} as
\begin{align}
    \circled{1}+\circled{2} &= -\nabla_{\rvx_t}\cdot\Big[\rvf(\rvx_t, t)p(\rvx_t\mid\rvx_T=x_T,\rvx_0=x_0)\Big] \\&
    +\frac{1}{2}g^2(t)\Bigg(\circled{3} + \circled{4} + \circled{5} + \circled{6}\Bigg) - g^2(t)\Bigg(\circled{4} + \circled{5}\Bigg)
\end{align}
We can notice that
\begin{align}
    \circled{3}+\circled{6} &= \nabla_{\rvx_t}\cdot\Bigg(p(\rvx_t\mid\rvx_T=x_T,\rvx_0=x_0)\gradnd{\log p(\rvx_t\mid\rvx_0=x_0)}{\rvx_t} \Bigg)\\
    \circled{4}+\circled{5} &= \nabla_{\rvx_t}\cdot\Bigg(p(\rvx_t\mid\rvx_T=x_T,\rvx_0=x_0)\gradnd{\log p(\rvx_T=x_T\mid\rvx_t)}{\rvx_t} \Bigg)
\end{align}
and using Bayes' rule,
\begin{align}
    \gradnd{\log p(\rvx_t\mid\rvx_T=x_T,\rvx_0=x_0)}{\rvx_t} = \gradnd{\log p(\rvx_T=x_T\mid\rvx_t)}{\rvx_t} + \gradnd{\log p(\rvx_t\mid\rvx_0=x_0)}{\rvx_t}
\end{align}
we have
\begin{align}
     \circled{3}+\circled{4}+\circled{5}+\circled{6} &= \nabla_{\rvx_t}\cdot\Bigg(p(\rvx_t\mid\rvx_T=x_T,\rvx_0=x_0)\gradnd{\log p(\rvx_t\mid\rvx_T=x_T,\rvx_0=x_0)}{\rvx_t} \Bigg)\\
     &=\nabla_{\rvx_t}\cdot\gradnd{p(\rvx_t\mid\rvx_T=x_T,\rvx_0=x_0)}{\rvx_t}
\end{align}
Therefore,
\begin{equation}
    \begin{aligned}
    &\parderiv{p(\rvx_t\mid\rvx_T=x_T,\rvx_0=x_0)}{t} \\=&  -\nabla_{\rvx_t}\cdot\Bigg[\Big(\rvf(\rvx_t, t)+g^2(t)\nabla_{\rvx_t}\log p(\rvx_T=x_T\mid\rvx_t)\Big)p(\rvx_t\mid\rvx_T=x_T,\rvx_0=x_0)\Bigg] \\
    &+\frac{1}{2}g^2(t)\nabla_{\rvx_t}\cdot\gradnd{p(\rvx_t\mid\rvx_T=x_T,\rvx_0=x_0)}{\rvx_t}
\end{aligned}
\end{equation}
which is a Fokker-Planck equation for a (forward) SDE with the modified drift term $$\rvf(\rvx_t, t)+g^2(t)\nabla_{\rvx_t}\log p(\rvx_T=x_T\mid\rvx_t)$$
To find the time derivative for $q(\rvx_t \mid \rvx_T) = \int_{\rvx_0}p(\rvx_t\mid\rvx_0, \rvx_T)\pdata(\rvx_0\mid \rvx_T)$, we can simply marginalize out $\rvx_0$ with distribution $\pdata(\rvx_0\mid\rvx_T)$ in the resulting Fokker-Planck, which can be achieved due to linearity of expectation with respect to $\rvx_0$. That is,
\begin{equation}
    \begin{aligned}
    &\E_{\rvx_0\sim \pdata(\rvx_0\mid \rvx_T=x_T)}\Big[\parderiv{p(\rvx_t\mid\rvx_T=x_T,\rvx_0)}{t}\Big]\\
    =& -\nabla_{\rvx_t}\cdot\Bigg[\Big(\rvf(\rvx_t, t)+g^2(t)\nabla_{\rvx_t}\log p(\rvx_T=x_T\mid\rvx_t)\Big)\E_{\rvx_0\sim \pdata(\rvx_0\mid \rvx_T=x_T)}\Big[p(\rvx_t\mid\rvx_T=x_T,\rvx_0)\Big]\Bigg] \\
    & +\frac{1}{2}g^2(t)\nabla_{\rvx_t}\cdot\gradnd{\E_{\rvx_0\sim \pdata(\rvx_0\mid \rvx_T=x_T)}\Big[p(\rvx_t\mid\rvx_T=x_T,\rvx_0)\Big]}{\rvx_t}
\end{aligned}
\end{equation}

Since for $t\in [0,T-c]$ for some $c>0$, Doob's h-function is well-defined, and $p(x_t\mid x_0, x_T)$ is smooth, and we can take the expectation inside the equations. Additionally, the drift adjustment $\nabla_{\rvx_t}\log p(\rvx_T=x_T\mid\rvx_t)$ does not depend on $\rvx_0$ the expectation is simply over $p(\rvx_t\mid\rvx_T=x_T,\rvx_0)$ expectation and by definition LHS is $q(\rvx_t\mid\rvx_T=x_T)$,
\begin{equation}
\begin{aligned}
    \parderiv{q(\rvx_t\mid\rvx_T=x_T)}{t} =& -\nabla_{\rvx_t}\cdot\Bigg[\Big(\rvf(\rvx_t, t)+g^2(t)\nabla_{\rvx_t}\log p(\rvx_T=x_T\mid\rvx_t)\Big)q(\rvx_t\mid\rvx_T=x_T)\Bigg] \\
    &+\frac{1}{2}g^2(t)\nabla_{\rvx_t}\cdot\gradnd{q(\rvx_t\mid\rvx_T=x_T)}{\rvx_t}
\end{aligned}    
\end{equation}
This characterizes a reverse SDE specified in \thmref{thm:revbridge}.

We can further use conversion trick in \citet{song2020score} to convert this into a continuity equation without any diffusion term where
\begin{align}
    \parderiv{q(\rvx_t\mid\rvx_T=x_T)}{t} = \nabla_{\rvx_t}\cdot\Big[\Tilde{\rvf}(\rvx_t, t)q(\rvx_t\mid\rvx_T=x_T)\Big]
\end{align}
where \begin{align}
\Tilde{\rvf}(\rvx_t, t) &= \rvf(\rvx_t, t)+g^2(t)\nabla_{\rvx_t}\log p(\rvx_T=x_T\mid\rvx_t) -\frac{1}{2}g^2(t)\nabla_{\rvx_t} \log q(\rvx_t\mid\rvx_T=x_T)\\
&= \rvf(\rvx_t, t)-g^2(t)\Big(\frac{1}{2}\nabla_{\rvx_t}\log q(\rvx_t \mid\rvx_T = x_T) - \nabla_{\rvx_t}\log p(\rvx_T=x_T\mid\rvx_t)\Big)
\end{align}
\end{proof}

\subsection{Special Cases of Denoising Diffusion Bridges}\label{sec:special-case-proof}

\mypara{Unconditional diffusion models.} We first give a general intuition that the marginal distribution of $\rvx_t$ sampling from the bridge is the same as sampling marginally from $p(\rvx_t \mid \rvx_0)$ for a diffusion transition kernel $p(\cdot)$. We can see this by observing 
\begin{align}
    \rvx_t = \frac{\text{SNR}_T}{\text{SNR}_t} \frac{\alpha_t}{\alpha_T}\rvx_T + \alpha_t \rvx_0(1-\frac{\text{SNR}_T}{\text{SNR}_t}) +  \sigma_t^2(1 - \frac{\text{SNR}_T}{\text{SNR}_t}) \rvepsilon_1
\end{align}
where $\rvepsilon_1\sim \gN(\vzero, \mI)$. And since we assume $\rvx_T\sim\gN(\alpha_T\rvx_0, \sigma_T^2\mI), $we rewrite the above equation as
\begin{align}
     \rvx_t &= \frac{\text{SNR}_T}{\text{SNR}_t} \frac{\alpha_t}{\alpha_T}(\alpha_T\rvx_0 + \sigma_T\rvepsilon_2)+ \alpha_t \rvx_0(1-\frac{\text{SNR}_T}{\text{SNR}_t}) +  \sigma_t\sqrt{(1 - \frac{\text{SNR}_T}{\text{SNR}_t})} \rvepsilon_1\\
        &= \frac{\text{SNR}_T}{\text{SNR}_t} \alpha_t\rvx_0 + \frac{\text{SNR}_T}{\text{SNR}_t}\frac{\alpha_t}{\alpha_T}\sigma_T\rvepsilon_2 + \alpha_t \rvx_0(1-\frac{\text{SNR}_T}{\text{SNR}_t}) +  \sigma_t\sqrt{(1 - \frac{\text{SNR}_T}{\text{SNR}_t})} \rvepsilon_1\\
        &=\alpha_t\rvx_0 + \sigma_t\rvepsilon
\end{align}
where $\rvepsilon\sim \gN(\vzero, \mI)$ and the last equality is due to the fact that the addition of two Gaussian with variances $\sigma_1^2$, $\sigma_2^2$ is another Gaussian with variance $\sigma_1^2 + \sigma_2^2$.

Formally, to show that it is a special case, we first observe that the score matching objective allows our network to approximate $\nabla_{\rvx_t}\log p(\rvx_t\mid \rvx_T)$ which is the conditional score of the diffusion transition kernel. Then we will show that the Fokker-Planck equation reduces to that of a diffusion when marginalizing out dependency on $\rvx_T$.

From proof of \thmref{thm:revbridge}, we know that the Fokker-Planck equation for $p(\rvx_t\mid \rvx_T)$ follows
\begin{align}
    \parderiv{p(\rvx_t\mid\rvx_T=x_T)}{t} =& -\nabla_{\rvx_t}\cdot\Bigg[\Big(\rvf(\rvx_t, t)+g^2(t)\nabla_{\rvx_t}\log p(\rvx_T=x_T\mid\rvx_t)\Big)p(\rvx_t\mid\rvx_T=x_T)\Bigg] \\
    &+\frac{1}{2}g^2(t)\nabla_{\rvx_t}\cdot\gradnd{p(\rvx_t\mid\rvx_T=x_T)}{\rvx_t}
\end{align}
Here we note that we use $p(\rvx_t\mid\rvx_T)$ because we are considering a diffusion process as a special case of a general $q(\rvx_t\mid \rvx_T)$ introduced in \thmref{thm:naiveobj}. We can marginalize out $\rvx_T$ such that
\begin{align}
     &\parderiv{\E_{\rvx_T\sim p(\rvx_T)}\Big[p(\rvx_t\mid\rvx_T)\Big]}{t} \nonumber \\
     =& \E_{\rvx_T\sim p(\rvx_T)}\Bigg[-\nabla_{\rvx_t}\cdot\Bigg[\Big(\rvf(\rvx_t, t)+g^2(t)\nabla_{\rvx_t}\log p(\rvx_T\mid\rvx_t)\Big)p(\rvx_t\mid\rvx_T)\Bigg] \Bigg] \nonumber\\
    &+\frac{1}{2}g^2(t)\nabla_{\rvx_t}\cdot\gradnd{\E_{\rvx_T\sim p(\rvx_T)}\Big[p(\rvx_t\mid\rvx_T)\Big]}{\rvx_t}
\end{align}
and so
\begin{align}
    \parderiv{p(\rvx_t)}{t} =& -\nabla_{\rvx_t}\cdot\Big(\rvf(\rvx_t, t) p(\rvx_t)\Big) - g^2(t)\nabla_{\rvx_t}\cdot \E_{\rvx_T\sim p(\rvx_T)}\Big[p(\rvx_t\mid\rvx_T) \nabla_{\rvx_t}\log p(\rvx_T\mid\rvx_t)\Big]\nonumber \\
    &+\frac{1}{2}g^2(t)\nabla_{\rvx_t}\cdot\gradnd{p(\rvx_t)}{\rvx_t}
\end{align}
and the second term can be reduced by writing the expectation explicitly as
\begin{align}
     &\E_{\rvx_T\sim p(\rvx_T)}\Big[p(\rvx_t\mid\rvx_T) \nabla_{\rvx_t}\log p(\rvx_T\mid\rvx_t)\Big]\nonumber\\
     =& \int_{\rvx_T} p(\rvx_T)p(\rvx_t\mid\rvx_T) \nabla_{\rvx_t}\log p(\rvx_T\mid\rvx_t) d\rvx_T\\
     =& p(\rvx_t)\int_{\rvx_T} p(\rvx_T\mid\rvx_t) \nabla_{\rvx_t}\log p(\rvx_T\mid\rvx_t) d\rvx_T\\
     =& p(\rvx_t)\int_{\rvx_T} \cancel{p(\rvx_T\mid\rvx_t)} \frac{\nabla_{\rvx_t}p(\rvx_T\mid\rvx_t)}{\cancel{p(\rvx_T\mid\rvx_t)}} d\rvx_T\\
     =& p(\rvx_t)\nabla_{\rvx_t}\int_{\rvx_T}p(\rvx_T\mid\rvx_t)d\rvx_T\\
     =& \vzero
\end{align}
Therefore, the resulting probability flow ODE is
\begin{align}
    \parderiv{p(\rvx_t)}{t} =& -\nabla_{\rvx_t}\cdot\Big(\rvf(\rvx_t, t) p(\rvx_t)\Big) + \frac{1}{2}g^2(t)\nabla_{\rvx_t}\cdot\gradnd{p(\rvx_t)}{\rvx_t}
\end{align}
which is that of a regular diffusion. Therefore, by setting data distribution $\pdata(\rvx_0, \rvx_T)$ to be $p(\rvx_T\mid \rvx_0)\pdata(\rvx_0)$ we recover unconditional diffusion models.

\mypara{OT-Flow Matching and Rectified Flow.} As proposed, we use a VE schedule such that $\rvf(\rvx_t, t) = \vzero$ and $\sigma_t^2 = c^2t$ for some constant $c\in [0,1]$. Then the probability flow ODE conditioned on $\rvx_0, \rvx_T$ becomes
\begin{align}
    d\rvx_t = - c^2 \Big[\frac{1}{2} \gradnd{\log q(\rvx_t\mid\rvx_0, \rvx_T)}{\rvx_t} - \log p(\rvx_T\mid \rvx_0)\Big] dt
\end{align}
Specifically, the drift term $D = - c^2 \Big[\frac{1}{2} \gradnd{\log q(\rvx_t\mid\rvx_0, \rvx_T)}{\rvx_t} - \log p(\rvx_T\mid \rvx_0)\Big]$ becomes
\begin{align}
    D = -\frac{1}{2}c^2\Bigg[ -\frac{\rvepsilon}{c\sqrt{t(1 - \frac{t}{T})}} + 2 \frac{(\frac{t}{T} \rvx_T + (1- \frac{t}{T}) \rvx_0 + c\sqrt{t(1 - \frac{t}{T})} \rvepsilon - \rvx_T)}{c^2(T - t)} \Bigg]
\end{align}
where $\rvx_t = \frac{t}{T} \rvx_T + (1- \frac{t}{T}) \rvx_0 + c\sqrt{t(1 - \frac{t}{T})} \rvepsilon $.
And we can rearrange the terms to be
\begin{align}
    D &= \Bigg[ -  \frac{(\frac{t}{T} \rvx_T + (1- \frac{t}{T}) \rvx_0  - \rvx_T)}{(T - t)} \Bigg] + \gO(c)\\
    &=  \Bigg[ -  \frac{( (1- \frac{t}{T}) \rvx_0  - (1- \frac{t}{T}) \rvx_T)}{(T - t)} \Bigg] + \gO(c)\\
    & = \Bigg[ \frac{(\rvx_T  - \rvx_0)}{T} \Bigg] + \gO(c)
\end{align}
And by taking $c \to 0$, we have $\lim_{c\to 0} D = \rvx_1 - \rvx_0$ for $T = 1$.
Therefore the network learns to match this drift term in the noiseless limit of denoising diffusion bridge in OT-Flow Matching and Rectified Flow case.

We next note that the original score-matching loss is no longer valid as bridge noise $\hat{\sigma}_t \rightarrow 0$ causes exploding magnitude of bridge score $\gradnd{\log q(\rvx_t | \rvx_0, \rvx_T)}{\rvx_t}$. We can then resort to matching against $\lim_{c\to 0} D $ altogether. 
 
 One additional caveat is that our framework as presented needs to take in $\rvx_T$ as an additional condition. To handle this, we note the generalized parameterization can be used to define $s_\theta(\rvx_t, \rvx_T, t) = c_\text{skip1}(t) \rvx_t + c_\text{skip2}(t) \rvx_T + c_\text{out}(t) V_\theta(\rvx_t, t)$ where $V_\theta(\rvx_t, t)$ is our actual network. We then set $c_\text{skip1}(t) = c_\text{skip2}(t) = 0$ and $c_\text{out}(t) = 1$ and uses loss $\E_{\rvx_t, t}\Bigg[\norm{s_\theta(\rvx_t, \rvx_T, t) - (\rvx_T - \rvx_0)}^2\Bigg] = \E_{\rvx_t, t}\Bigg[\norm{V_\theta(\rvx_t, \rvx_T, t) - (\rvx_T - \rvx_0)}^2\Bigg] $, which is the case of OT-Flow-Matching and Rectified Flow.

\subsection{Generalized Parameterization}\label{sec:edm-proof}

We now derive the EDM scaling functions from first principle, as suggested by \citep{karras2022elucidating}. 

Let  $a_t = \alpha_t/\alpha_T* \text{SNR}_T/\text{SNR}_t$, $b_t = \alpha_t (1 -  \text{SNR}_T/\text{SNR}_t)$, $c_t = \sigma_t^2 (1- \text{SNR}_T/\text{SNR}_t)$. First, we expand the pred-$\rvx$ objective as
\[
\E_{\rvx_t, \rvx_0, \rvx_T, t}\Big[\Tilde{w}(t) \norm{ c_\text{skip}(t) \rvx_t + c_\text{out}F_\theta(c_\text{in}(t) \rvx_t, c_\text{noise}(t)) - \rvx_0}^2\Big]
\]
where $\rvx_t = a_t\rvx_T +  b_t\rvx_0 + \sqrt{c_t}\rvepsilon$ for $\rvepsilon\sim\gN(\vzero, \mI)$. 
To derive $c_{\text{in}}(t)$, we set the variance of the resulting input $c_\text{in}(t)\rvx_t$ to be 1, where
\begin{align}
    &c_\text{in}^2(t)\Big(a_t^2\sigma_T^2 +  b_t^2\sigma_0^2 + 2a_tb_t\sigma_{0T} + c_t\Big) = 1\\
    \implies &c_\text{in}(t) = \frac{1}{\sqrt{a_t^2\sigma_T^2 +  b_t^2\sigma_0^2 + 2a_tb_t\sigma_{0T} + c_t}}
\end{align}
For simplicity we denote the neural network as $F_\theta$, and the inner square loss can be expanded to be
\begin{align}
    &\Tilde{w}(t)\norm{ c_\text{skip}(t) \Big(a_t\rvx_T +  b_t\rvx_0 + \sqrt{c_t}\rvepsilon\Big)+ c_\text{out}F_\theta - \rvx_0}^2\\
    =& \Tilde{w}(t) c_\text{out}^2(t) \norm{F_\theta -\frac{1}{c_\text{out}(t)} \Bigg( \Big[1 - c_\text{skip}(t) b_t\Big]\rvx_0 - c_\text{skip}(t)\Big[a_t\rvx_T + \sqrt{c_t}\rvepsilon\Big] \Bigg) }^2
\end{align}
And we want the prediction target to have variance 1, thus
\begin{equation}
\frac{1}{c_\text{out}^2(t)} \Bigg( \Big[1 - c_\text{skip}(t) b_t\Big]^2\sigma_0^2 + c_\text{skip}(t)^2\Big[a_t\sigma_T^2 + c_t\Big]  - 2\Big[1 - c_\text{skip}(t) b_t\Big]c_\text{skip}(t)a_t\sigma_{0T} \Bigg) = 1
\end{equation}
and 
\begin{equation}
c_\text{out}^2(t) = \Big[1 - c_\text{skip}(t) b_t\Big]^2\sigma_0^2 + c_\text{skip}(t)^2\Big[a_t\sigma_T^2 + c_t\Big] - 2\Big[1 - c_\text{skip}(t) b_t\Big]c_\text{skip}(t)a_t\sigma_{0T}
\end{equation}
Following reasoning in \citet{karras2022elucidating}, we minimize $c_\text{out}(t)^2$ \wrt $c_\text{skip}(t)$ by taking derivative and set to 0, which is
\begin{equation}
-2(1 - c_\text{skip}(t) b_t)b_t\sigma_0^2 + 2c_\text{skip}(t)(a_t^2\sigma_T^2 + c_t) - 2(1 - 2c_\text{skip}(t)b_t) a_t\sigma_{0T} = 0
\end{equation}
and this implies
\begin{align}
    c_\text{skip}(t) &= \frac{b_t\sigma_0^2 + a_t\sigma_{0T}}{a_t^2\sigma_T^2 +  b_t^2\sigma_0^2 + 2a_tb_t\sigma_{0T} + c_t}\\
    &= \Big(b_t\sigma_0^2 + a_t\sigma_{0T}\Big) * c_\text{in}(t)^2
\end{align}
And
\begin{align}
    c_\text{out}^2(t) &= \sigma_0^2 - 2c_\text{skip}(t) b_t\sigma_0^2 + \Big(b_t\sigma_0^2 + a_t\sigma_{0T} \Big)c_\text{skip}(t) - 2 c_\text{skip}(t) a_t\sigma_{0T}\\
    &= \sigma_0^2 - \Big( b_t\sigma_0^2 + a_t\sigma_{0T} \Big)c_\text{skip}(t) \\
    &= \frac{a_t^2(\sigma_0^2\sigma_T^2 - \sigma_{0T}^2) + \sigma_0^2 c_t}{a_t^2\sigma_T^2 +  b_t^2\sigma_0^2 + 2a_tb_t\sigma_{0T} + c_t}\\
    \implies & c_\text{out}(t) = \sqrt{a_t^2(\sigma_0^2\sigma_T^2 - \sigma_{0T}^2) + \sigma_0^2 c_t} * c_\text{in}(t) 
\end{align}
Finally, $\Tilde{w}(t)  c_\text{out}(t)^2 (t)=1 \implies \Tilde{w}(t)  =1 / c_\text{out}(t)^2  $, and for time, we simply reuse that proposed in \citet{karras2022elucidating} as no significant change in time's distribution. 

\mypara{EDM \citep{karras2022elucidating} as a special case.} In the case of unconditional diffusion models, we have $\rvx_T = \rvx_0 + T\rvepsilon$, so $\sigma_T^2 = \sigma_0^2 + T^2$ and $\sigma_{0T} = \sigma_0^2$. Additionally, $a_t = t^2 / T^2$, $b_t = (1 - t^2/T^2)$, $c_t = t^2 (1-t^2/T^2)$. Substituting in these into the coefficients, we have
\begin{align}
    c_\text{in}(t) &= \frac{1}{\sqrt{\frac{t^4}{T^4}( \sigma_0^2 + T^2)+  (1 - \frac{t^2}{T^2})^2\sigma_0^2 + 2\frac{t^2}{T^2}(1 - \frac{t^2}{T^2})\sigma_0^2  + t^2 (1 - \frac{t^2}{T^2})}}\\
    &=  \frac{1}{\sqrt{\cancel{\frac{t^4}{T^4}\sigma_0^2} +\cancel{\frac{t^4}{T^2}} +  (1 - \frac{t^2}{T^2})^2\sigma_0^2 + 2\frac{t^2}{T^2}\sigma_0^2 -\cancel{2}\frac{t^4}{T^4}\sigma_0^2 + t^2 - \cancel{\frac{t^4}{T^2}}  }}\\
    &= \frac{1}{\sqrt{\sigma_0^2 +t^2}}\\
    c_\text{skip}(t) &= \frac{(1 - \frac{t^2}{T^2})\sigma_0^2 + \frac{t^2}{T^2}\sigma_0^2}{\sigma_0^2 +t^2}\\
    &= \frac{\sigma_0^2}{\sigma_0^2 +t^2}\\
    c_\text{out}(t) &= \sqrt{\frac{t^4}{T^4}(\sigma_0^2(\sigma_0^2 + T^2) - \sigma_{0}^4) + \sigma_0^2 t^2 (1 - \frac{t^2}{T^2})} * c_\text{in}(t)\\
    &= \sqrt{\frac{t^4}{T^4}(\sigma_0^4 + \sigma_0^2 T^2 - \sigma_0^4)  + \sigma_0^2 t^2 (1 - \frac{t^2}{T^2})} * c_\text{in}(t)\\
    &=\sqrt{\frac{t^4}{T^2}\sigma_0^2   + \sigma_0^2 t^2 - \sigma_0^2  \frac{t^4}{T^2}} * c_\text{in}(t)\\
    & = \frac{ \sigma_0 t}{\sqrt{\sigma_0^2 +t^2}}
\end{align}
And $\Tilde{w}(t) = 1 / c_\text{out}^2(t) = 
(\sigma_0^2 +t^2) / (\sigma_0^2 t^2) = 1 / t^2 + 1 / \sigma_0^2$.

\subsection{Sampler Discretization}\label{sec:discretization}

EDM introduces Heun sampler, which discretizes the sampling steps into $t_0 < t_1 \dots < t_N$ where
\begin{align}
    t_{i>0} = \Big(T^{\frac{1}{\rho}} + \frac{N - i}{N-1}(t_{\text{min}}^{\frac{1}{\rho}} - T^{\frac{1}{\rho}})\Big)^\rho \quad \text{and} \quad t_0 = 0
\end{align}
and $\rho = 7$ is a default choice. It then integrates over the probability flow ODE path with second-order Heun steps for each such discretization step. We reuse this discretization for all our experiments.

\section{Experiment Details}\label{sec:exp-details}

\mypara{Architecture.} For unconditional generation, architectures are reused from \citet{karras2022elucidating} for both CIFAR-10 and FFHQ-64$\times$64. For pixel-space translation, we use ADM~\citep{dhariwal2021diffusion} architecture for both 64$\times$64 and 256$\times$256 resolutions. For latent-space translation, which reduces to 32$\times$32 resolution in the latent space, we use ADM~\citep{dhariwal2021diffusion} architecture for 64$\times$64 resolution but change the channel dimensions from 192 to 256 and reduce the number of residual blocks from 3 to 2, and we fix everything else to be same as that for 64$\times$64 resolution. We use 0.1 dropout for all models. Conditioning is done via concatenation at the input level.

\mypara{VE and VP bridge parameterization.} There are many schedules we can choose for both types of bridges. For all our experiments, VE bridges follow $\sigma_t = t$ and $\alpha_t = 1$ and VP bridges follow a time-invariant drift $\rvf(\rvx_t, t) = - 0.5 \beta_0 \rvx_t$. This is a special case of the linear noise schedule considered in \citet{song2020score} with $\rvf(\rvx_t, t) = -0.5 t (\beta_1 - \beta_0) - 0.5 \beta_0$. We simply choose $\beta_1 = \beta_0$ because this results in a bridge that has symmetric noise levels \wrt time. We observe that linearly increasing drift causes the max noise to shift towards a higher $t$ and the noise decreases faster to 0 at $t=T$ than for a symmetric bridge. This makes the learning process more difficult and degrades performance.

\mypara{Training.} We use AdamW optimizer with 0.0001 learning rate and no weight decay. The batch size is 256 for all image size less than 256 and training is done on 4 NVIDIA A100 40G. For 256$\times$256 resolution, the batch size is 4 accumulated 4 times such that the effective batch size is 64, trained on 4 NVIDIA A100 40G. The training is terminated at 500K iterations. During training, for image-to-image translation, we set $\sigma_0 = \sigma_T = 0.5$, $\sigma_{0T} = \sigma_0^2 / 2$, and for unconditional generation, we set $\sigma_0 = 0.5$, $\sigma_T = \sqrt{\sigma_0^2 + T^2}$ and $\sigma_{0T} = \sigma_0^2$. We use random flipping as our data augmentation for image-to-image translation and reuse augmentation from \citet{karras2022elucidating} for generation.

\mypara{Baselines.} All baselines are trained using the same architecture as ours for each experiment. For SDEdit, we use pretrained EDM model on $\rvx_0$ and conduct image-to-image translation by first noising $\rvx_T$ and denoising using the pretrained model. We reuse the noise schedule proposed by \citep{karras2022elucidating} and for reasonable generation while retaining global structure of the image conditions, we noise $\rvx_T$ using the noise variance indexed at 1/3 of EDM noise schedule and denoise starting from this noised image for the remaining 1/3 of total of $N$ steps. For DDIB, we train two separate unconditional models starting for $\rvx_0$ and $\rvx_T$ separately and perform translation by reversing DDIM starting from $\rvx_T$ and generating using DDIM for $\rvx_0$. We reuse the original baseline code for all baselines while. 

\mypara{Sampling.} For all experiments we evaluate models on a low-step regime, \ie the same number of sampling steps. For all experiments, we set guidance scale $w=0.5$ and for image translation and unconditional generation, we use euler step ratio ratio $s=0.33$ and $s=0$ respectively. In case of $s=0$, no Euler step is done. With these settings, we set $N=18$, or $\text{NFE}=53$, for 32$\times$32 resolution image translation, and for all other resolutions, we use $N=40$, or $\text{NFE}=118$, for image translation. For unconditional generation, $N=18 \implies \text{NFE}=36$ for CIFAR-10 and $N=40 \implies \text{NFE}=79$ for FFHQ-64$\times$64. FID and IS scores are calculated using the entire training set for all datasets for image translation tasks. They are calculated using 50K samples for unconditional generation tasks. 

\subsection{Additional Results.}\label{sec:more-res}

\mypara{Latent space translation.} Many real world applications of diffusion models rely on the existence of a latent space~\citep{rombach2022high}, which significantly relieves computational burden in practice. We thus also investigate whether \model can naturally adapt to latent space translation tasks. We conduct our experiment on Day$\to$Night~\citep{isola2017image} which is chosen because we observe that the autoencoder presented in~\citet{rombach2022high} with factor 8 resolution reduction can faithfully reconstruct the dataset in high quality. We reuse the metrics and baselines from before, and we evaluate all models with $N=18$ using the EDM sampler default for $32\times 32$ images, which is the size for the latent space. To isolate the evaluation for latent space translation, we use the reconstructed ground-truths by the autoencoder as our pseudo ground-truths against which the decoded samples are compared for all metrics. Quantitative results are presented in \tabref{tab:pix-i2i}.

\begin{table}[h]
    \centering
    \resizebox{0.5\linewidth}{!}{
\begin{tabular}{lcccc}
\toprule
  &  \multicolumn{4}{c}{Day$\to$Night-256$\times$256}  \\
\cmidrule{1-5}
& FID $\downarrow$ & IS $\uparrow$ & LPIPS $\downarrow$& MSE $\downarrow$ \\
\midrule
Pix2Pix &157.1 &2.48 & 0.622 & 0.148 \\
\midrule
SDEdit  & 151.1 & 2.93 &0.582 &0.412 \\
\midrule
DDIB & 226.9  &  2.11 & 0.789 &0.824 \\
\midrule
Rectified Flow & \textbf{ 12.38} & 3.90 & 0.366  & \textbf{0.129} \\
\midrule
I$^{2}$SB & 15.56 & \textbf{4.03} & \textbf{0.363} & 0.412 \\
\midrule
\midrule
\textbf{\model} & 27.63 & 3.92 & 0.549 & 0.145 \\
\bottomrule
\end{tabular}
}
\caption{Evaluation of latent-space image-to-image translation.}
    \label{tab:d2n}
\end{table}

We notice that our model performs less well compared to Rectified Flow and I$^2$SB but comes second in IS and MSE. We hypothesize that the predict-$\rvx$ parameterization is optimized for pixel-space generation and latent space contains structures that is more difficult for this parameterization to learn. Nevertheless, qualitative results from our model is high-quality, as shown in \figref{fig:d2n}.

\mypara{Additional visualization}

\begin{figure}
    \centering
    \includegraphics[width=0.95\linewidth]{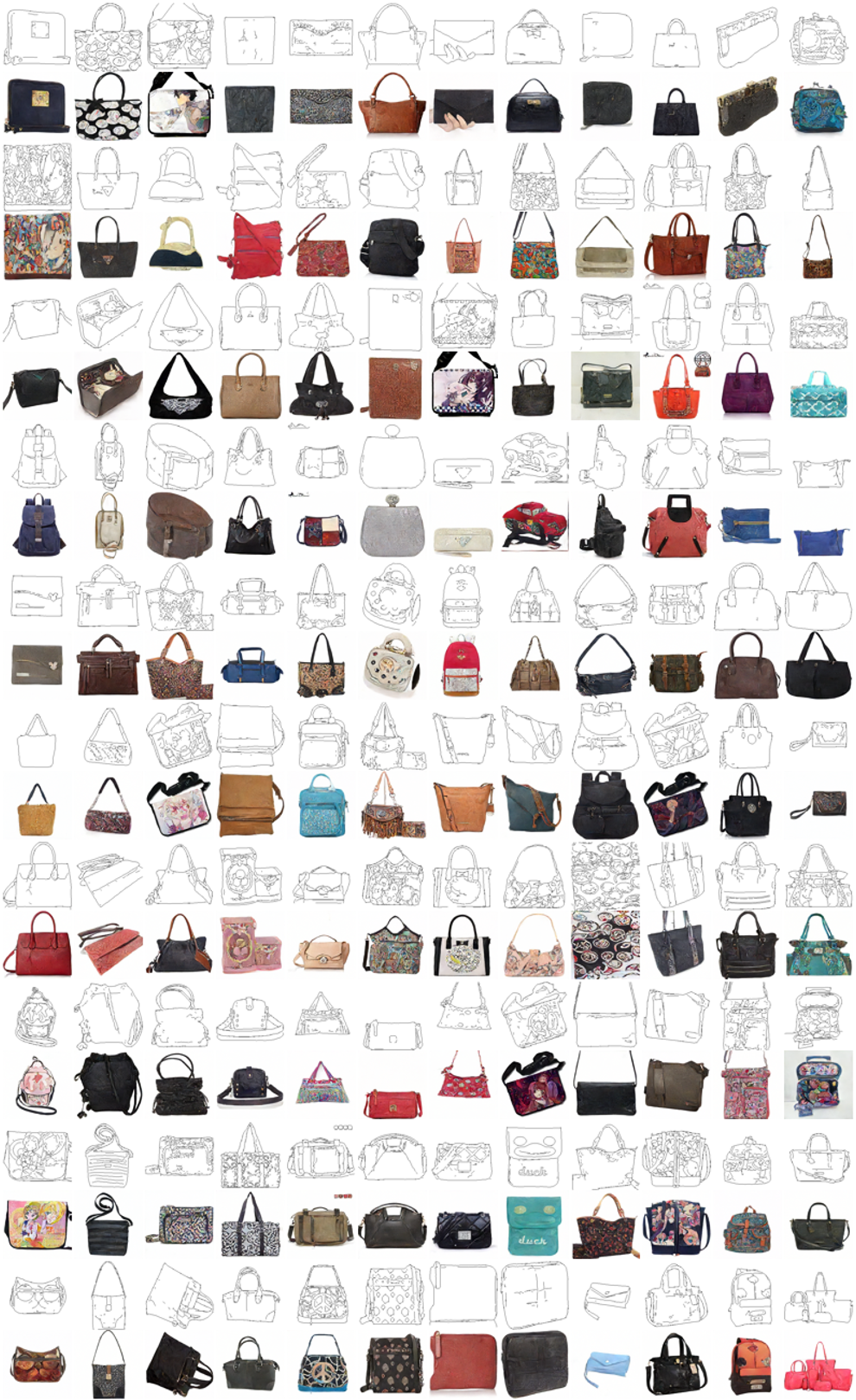}
    \caption{Additional Edges$\to$Handbags results.}
\end{figure}

\begin{figure}[h]
    \centering
    \includegraphics[width=0.95\linewidth]{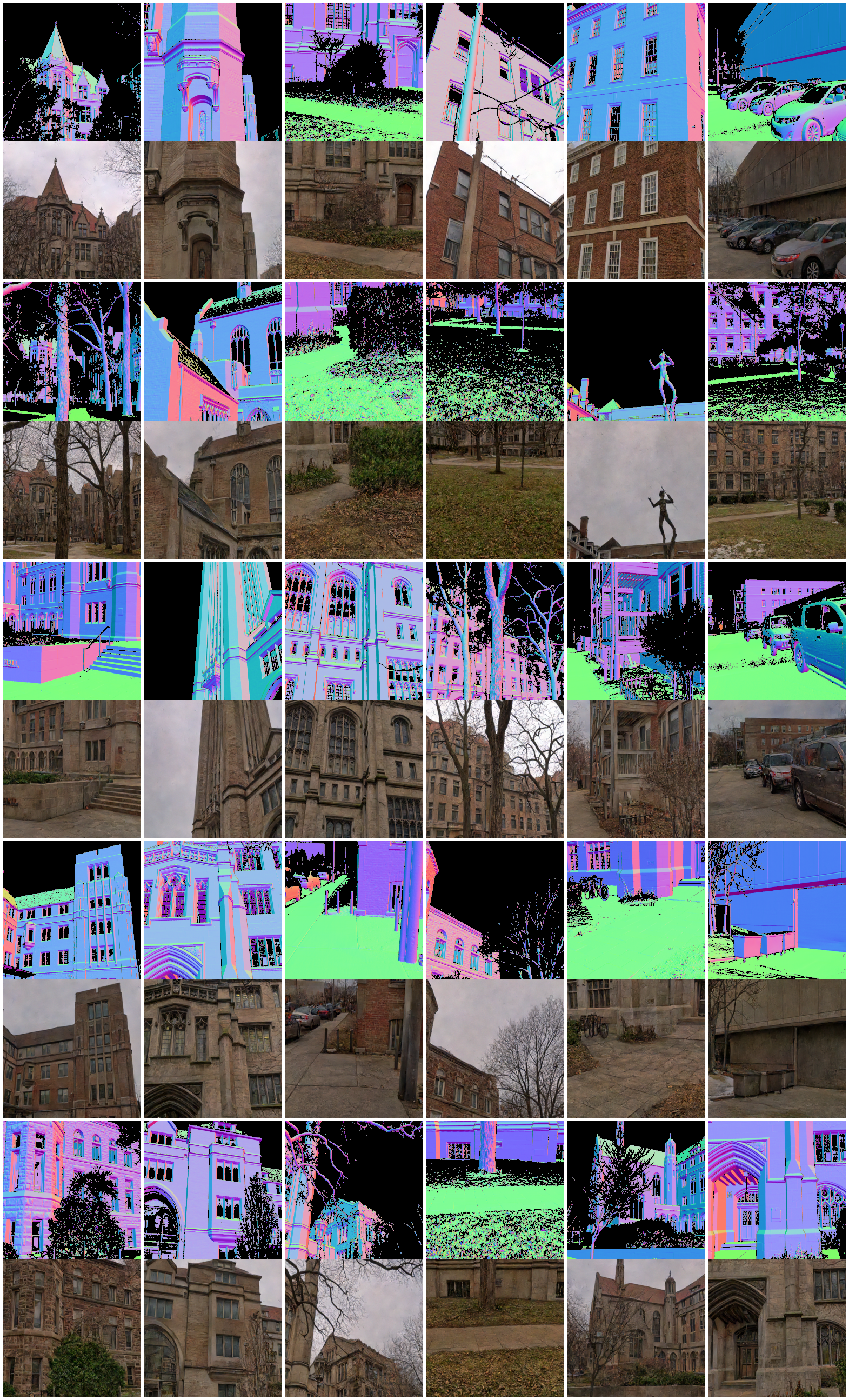}
    \caption{Additional DIODE results.}
\end{figure}

\begin{figure}
    \centering
    \includegraphics[width=0.95\linewidth]{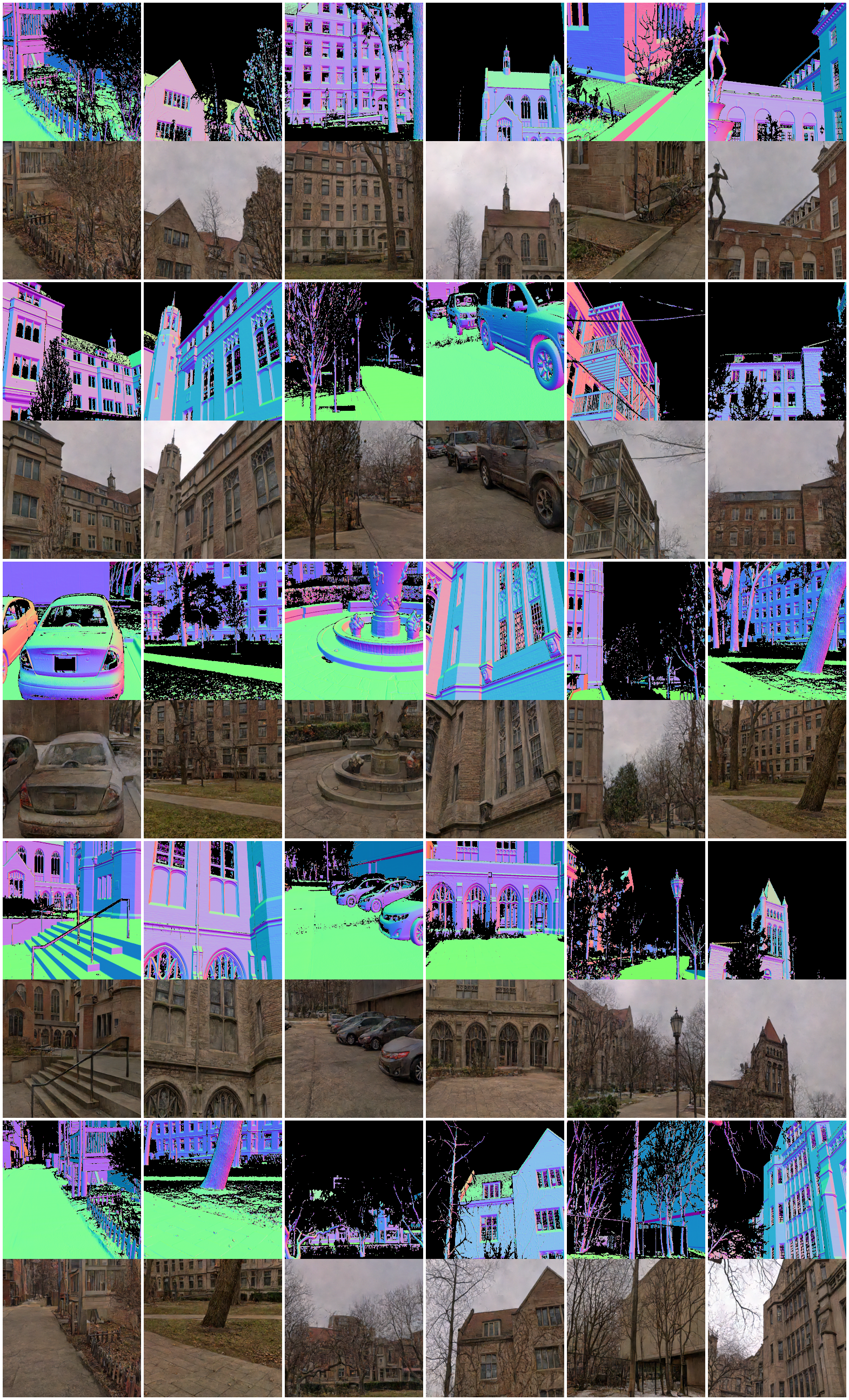}
    \caption{Additional DIODE results.}
\end{figure}

\begin{figure}[t]
    \centering
    \includegraphics[width=\linewidth]{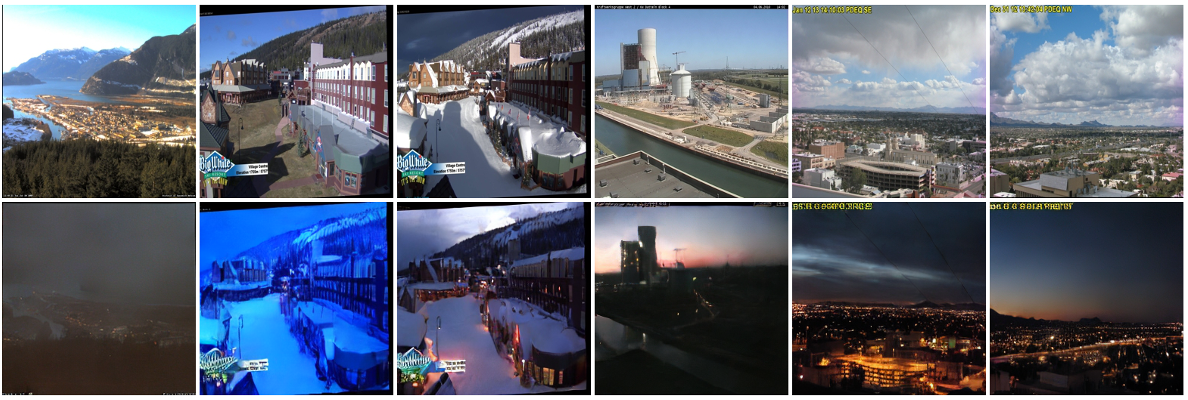}
    \caption{Visualization for Day$\to$Night translation. Top: day images. Bottom: night translations.}
    \label{fig:d2n}
\end{figure}

\end{document}